\theoremstyle{plain}
\newtheorem{theorem}{Theorem}[section]
\newtheorem{proposition}[theorem]{Proposition}
\theoremstyle{definition}
\theoremstyle{remark}
\title{Diffeomorphic interpolation for efficient persistence-based topological optimization}
\author{Mathieu Carrière \\
  DataShape \\
  Centre Inria d'Université Côte d'Azur\\
  Sophia Antipolis, France \\
  \texttt{mathieu.carriere@inria.fr} \\
  \And
  Marc Theveneau\thanks{Part of this work was done when MT was intern in the Laboratoire d'Informatique Gaspard Monge and student in Université Paris-Saclay.} \\
  Shape Analysis Group \\
  Computer Science department, McGill \\
  Montréal, Quebec, Canada \\
  \texttt{marc.theveneau@mila.quebec}
  \And
    Théo Lacombe \\
    Laboratoire d’Informatique Gaspard Monge,\\
    Univ. Gustave Eiffel, CNRS, LIGM, F-77454\\
    Marne-la-Vallée, France \\
    \texttt{theo.lacombe@univ-eiffel.fr}
  }
\date{}
\begin{document}
\maketitle

\begin{abstract}
Topological Data Analysis (TDA) provides a pipeline to extract quantitative topological descriptors from structured objects. This enables the definition of topological loss functions, which assert to what extent a given object exhibits some topological properties. 
These losses can then be used to 
perform topological optimization
via gradient descent routines. 
While theoretically sounded, topological optimization faces an important challenge: gradients tend to be extremely sparse, in the sense that the loss function typically depends on only very few coordinates of the input object, yielding dramatically slow optimization schemes in practice.

Focusing on the central case of topological optimization for point clouds, we propose in this work to overcome this limitation using \emph{diffeomorphic interpolation}, turning sparse gradients into smooth vector fields defined on the whole space, with quantifiable Lipschitz constants. 
In particular, we show that our approach combines efficiently with subsampling techniques routinely used in TDA, as the diffeomorphism derived from the gradient computed on a subsample can be used to update the coordinates of the full input object, allowing us to perform topological optimization on point clouds at an unprecedented scale. 
Finally, we also showcase the relevance of our approach for black-box autoencoder (AE) regularization, where we aim at enforcing topological priors on the latent spaces associated to fixed, pre-trained, black-box AE models, 
and where we show that
learning a diffeomorphic flow can be done once and then re-applied to new data in linear time 
(while vanilla topological optimization has to be re-run from scratch). 
Moreover, reverting the flow allows us to generate data by sampling the topologically-optimized latent space directly, yielding better interpretability of the model.
\end{abstract}

\section{Introduction}
\label{sec:intro}

Persistent homology (PH) is a central tool of Topological Data Analysis (TDA) that enables the extraction of quantitative topological information (such as, e.g., the number and sizes of loops, connected components, branches, cavities, etc) about structured objects (such as graphs, times series or point clouds sampled from, e.g., submanifolds), summarized in compact descriptors called \emph{persistence diagrams} (PDs). 
PDs were initially used as features in Machine Learning (ML) pipelines; due to their strong invariance and stability properties, they have been proved to be powerful descriptors in the context of classification of time series \cite{umeda2017time,dindin2020topological}, graphs \cite{carriere2020perslay, Hofer2020, Horn2022}, images \cite{adams2017persistenceImages,Hu2019,clough2020topological}, shape registration \cite{Chazal2009a,carriere2015shape3d,poulenard2018topological}, or analysis of neural networks \cite{gebhart2017adversary,birdal2021intrinsic,Lacombe2021}, to name a few.

Another active line or research at the crossroad of TDA and ML is \emph{(persistence-based) topological optimization}, where one wants to modify an object $X$ 
so that it satisfies some topological constraints as reflected in its persistence diagram $\dgm(X)$. 
The first occurrence of this idea appears in \cite{gameiro2016continuation}, where one wants to deform a point cloud $X \in \bR^{n \times d}$ so that $\dgm(X)$ becomes as close as possible (w.r.t.~an appropriate metric denoted by $W$) to some target diagram $D_\mathrm{target}$, hence yielding to the problem of minimizing $X \mapsto W(\dgm(X), D_\mathrm{target})$. 
This idea has then been revisited with different flavors, for instance by adding topology-based terms in standard losses in order to regularize ML models \cite{chen2019topological,nigmetov2020topological,Moor2020}, improving ML model reconstructions by explicitly accounting for topological features \cite{clough2020topological}, or improving correspondences between 3D shapes by forcing matched regions to have similar topology~\cite{poulenard2018topological}.  
Formally, this goes through the minimization of an objective function 
\begin{equation}\label{eq:topological_loss_intro}
    L : X \mapsto \ell(\dgm(X)) \in \bR,
\end{equation}
where $\ell$ is a user-chosen loss function that quantifies to what extent $\dgm(X)$ reflects some prescribed topological properties inferred from $X$. 
Under mild assumptions (see \cref{sec:background}), the map $L$ is differentiable generically and its gradients are obtained as a byproduct of the computation of $\dgm(X)$. 
However, these approaches are limited in practice by two major issues: $(i)$
the computation of $X \mapsto \dgm(X)$ scales poorly with the size of $X$ (e.g., number of points $n$ in a point cloud, number of nodes in a graph, etc), and
$(ii)$ the gradient $\nabla L(X)$ tends to be very \emph{sparse}: if $X = (x_1,\dots,x_n) \in \bR^{n \times d}$ is a point cloud, $\nabla L(X)_i \neq 0$ for only very few indices $i \in \{1,\dots,n\}$
(the corresponding points are called the {\em critical points} of the topological gradient, see \cref{subsec:background_tda}). 

\paragraph{Related works.}
Several articles have studied topological optimization in the TDA literature. The standard, or {\em vanilla}, framework to define and study gradients obtained from topological losses was described in~\cite{carriere2021optimizing,leygonie2021framework}, where 
the high sparsity and long computation times were first identified. 
To mitigate this issue, the authors of \cite{nigmetov2024topological} introduces the notion of \emph{critical set} that extends the usually sparse set of critical points in order to get a 
gradient-like object that would affect more points in $X$. 
In \cite{solomon2021fast}, the authors use an average of the 
vanilla topological gradients of several subsamples to get a denser and faster gradient. 
On the theoretical side, the authors of~\cite{leygonie2023gradient} demonstrated that adapting the stratified structure
induced by PDs to the gradient definition enables faster convergence. 

\paragraph{Limitations.} Despite proposing interesting ways to accelerate gradient descent, the approaches mentioned above are still limited in the sense that
their proposed gradients
are not defined on the whole space, but only on a sparse subset of the current observation $X$,
which still prevents their use in different contexts, that we investigate in our experiments (Section~\ref{sec:expe}).
First, when the data has more than tens of thousands of points,
the number of subsamples needed to capture relevant topological structures (when using~\cite{solomon2021fast}), as well as
the critical set computations (when using~\cite{nigmetov2024topological}), both
become {\em practically infeasible}.
Second, when optimizing the topology of datasets obtained as {\em latent spaces} of a
{\em black-box autoencoder model} (i.e., an autoencoder with forbidden access
 to its architecture, parameters, and training), then $(a)$
the topological gradients of~\cite{solomon2021fast, nigmetov2024topological} {\em cannot} be re-used to process new 
such datasets, and 
topological optimization has to be performed from scratch every time that new data comes in,
$(b)$ this also impedes their {\em transferability}, 
as re-running gradient descent every time makes it very difficult to guarantee some stability for the final output, and 
finally $(c)$
one {\em cannot} generate new data by sampling the optimized latent spaces directly, as it would
require to apply the sequence of reverted gradients (which are not well-defined everywhere). 

\paragraph{Contributions and Outline.} 
In this article, we propose to 
replace the standard gradient $\nabla L(X)$ of \eqref{eq:topological_loss_intro} derived formally by a \emph{diffeomorphism} $v : \bR^d \to \bR^d$ that \emph{interpolates} $\nabla L(X)$ on its non-zero entries, that is $v(x_i) = \nabla L(X)_i$ for $i \in I \coloneqq \{j\,|\, \nabla L(X)_j \neq 0 \}$ and is, in some sense, as smooth as possible. 
More precisely, our contribution is three-fold:
\begin{itemize}
\item We introduce a {\em diffeomorphic interpolation} of the vanilla topological gradient, which extends this gradient to a smooth and denser vector field defined
on the whole space 
$\bR^d$, and which is able to move a lot more points in $X$ at each iteration,
\item We prove that its updates indeed decrease topological losses, and we quantify its smoothness by upper bounding its Lipschitz constant (in the context of topological losses), 
\item We showcase its practical efficiency: we show that it compares favorably to the main baseline~\cite{nigmetov2024topological} in terms of convergence speed, that its combination with subsampling~\cite{solomon2021fast} allows to process datasets whose sizes are currently out 
of reach in TDA, and that it can successfully be used for the tasks mentioned above concerning black-box autoencoder models. 
\end{itemize}

\cref{sec:background} provides necessary background in Topological Data Analysis and diffeomorphic interpolations. Section~\ref{sec:methodo} presents our approach and its corresponding
guarantees, and Section~\ref{sec:expe} showcases our experiments. 
Limitations and further research directions are discussed in \cref{sec:conc}.

\section{Background}
\label{sec:background}

\subsection{Topological Data Analysis}
\label{subsec:background_tda}

In this section, we recall the basic materials of Topological Data Analysis (TDA), and refer the interested reader to~\cite{edelsbrunner2010computational,oudot2015persistence} for a thorough overview. 
We restrict the presentation to our case of interest: extracting topological information from a point cloud using the standard \emph{Vietoris-Rips} (VR) filtration. 
A more extensive presentation of the TDA machinery is provided in Appendix~\ref{app:tda}.

\begin{figure}
    \includegraphics[width=0.10\textwidth]{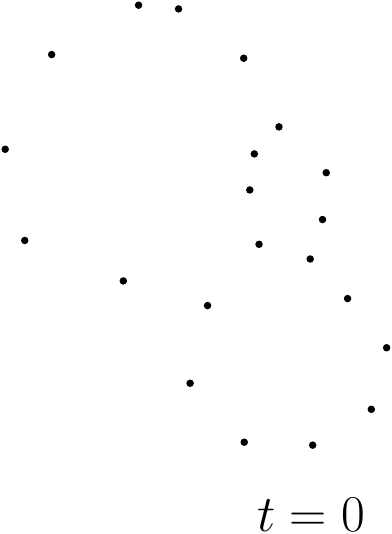}
    \includegraphics[width=0.11\textwidth]{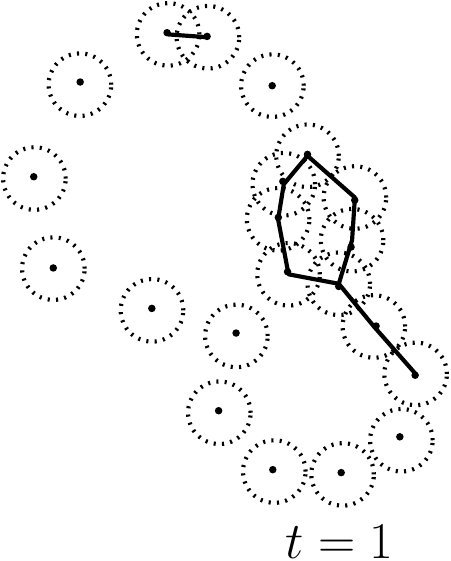}
    \includegraphics[width=0.12\textwidth]{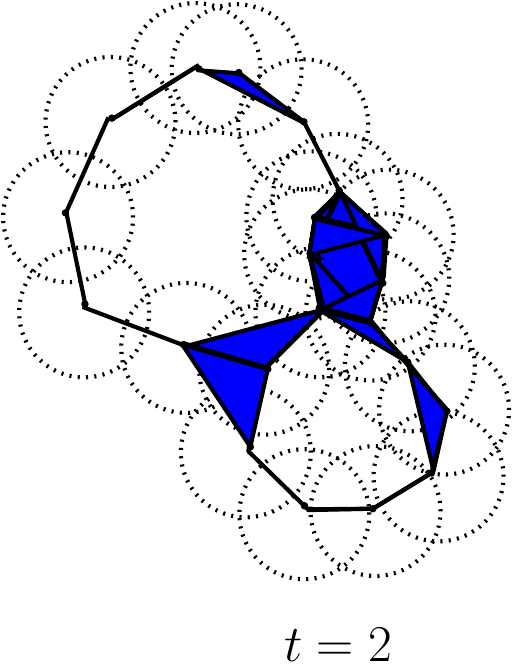}
    \includegraphics[width=0.13\textwidth]{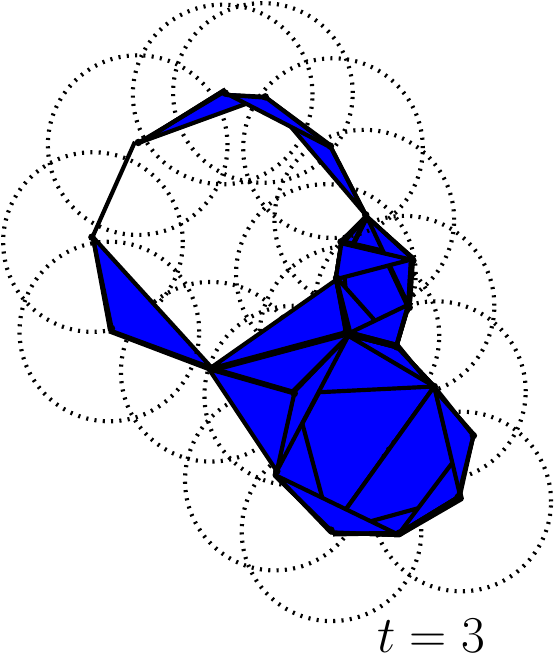}
    \includegraphics[width=0.14\textwidth]{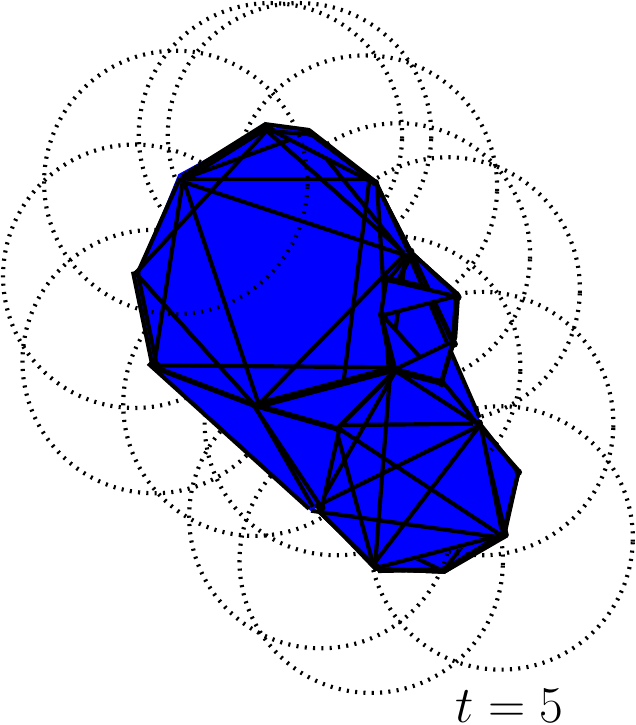}
    \includegraphics[width=0.3\textwidth]{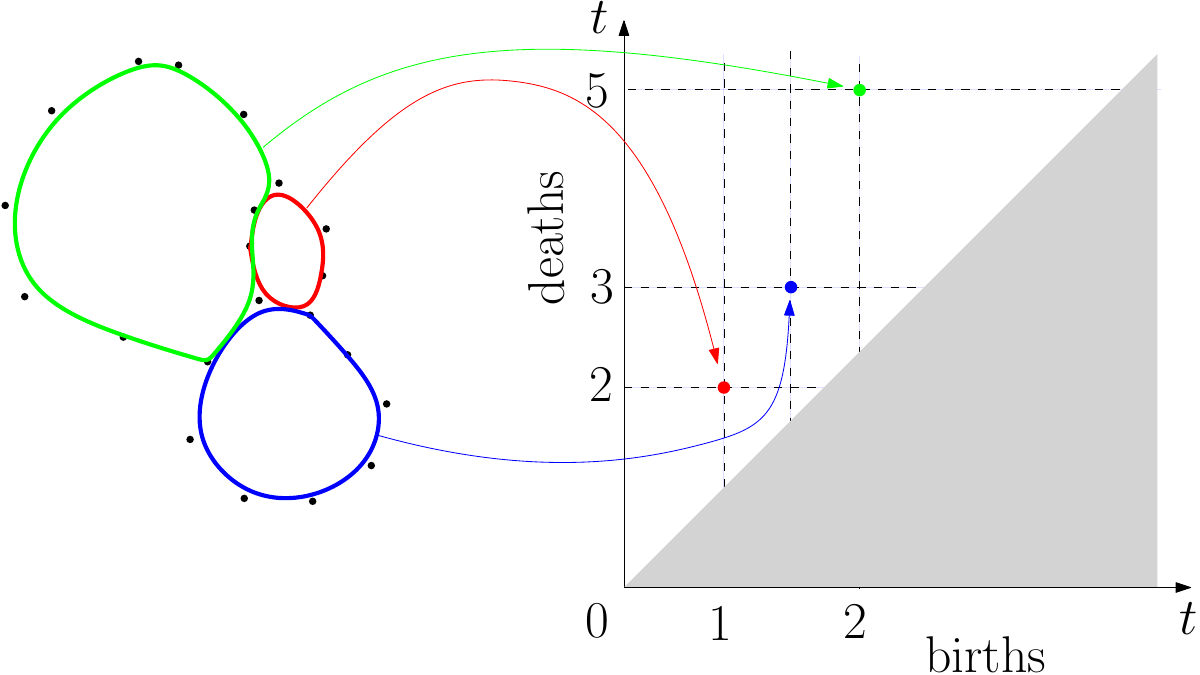}
    \caption{Illustration of the Vietoris-Rips filtration on a point cloud in $\bR^d$, focusing on one-dimensional topological features (loops). When the filtration parameter $t$ increases, loops appear and disappear in the filtration. These values are accounted in the resulting persistence diagram (right). }\label{fig:illu_basic_tda} 
\end{figure}

Let $X = (x_1,\dots,x_n) \in \bR^{n\times d}$. 
The Vietoris-Rips filtration consists of building an increasing sequence of simplicial complexes $(K_t)_t$ over $X$ by inserting a simplex $\sigma = (x_{i_1},\dots, x_{i_p})$ whenever $\forall j,j' \in \{1,\dots,p\}$, $\|x_{i_j} - x_{i_{j'}}\| \leq t$. 
Each time a simplex $\sigma$ is inserted, it either creates a topological feature (e.g., inserting a face creates a cavity, that is a $2$-dimensional topological feature) or destroy a pre-existing feature (e.g., the face insertion fills a loop, that is a $1$-dimensional feature, making it topologically trivial). 
The persistent homology machinary tracks the apparition and destruction of such features in the so-called \emph{persistence diagram} (PD) of $X$, denoted by $\dgm(X)$. 
Therefore, $\dgm(X)$ is a set of points in $\bR^2$ of the form $(t_b,t_d)$ with $t_d \geq t_b$, where each such point accounts for the presence of a topological feature inferred from $X$ that appeared at time $t_b$ following the insertion of an edge $(x_{i_1}, x_{i_2})$ with $\|x_{i_1} - x_{i_2}\|=t_b$ and disappeared at time $t_d$ following the insertion of an edge $(x_{i_3}, x_{i_4})$ with $\|x_{i_3} - x_{i_4}\|=t_d$. 
\cref{fig:illu_basic_tda} illustrates this construction. 
From a computational standpoint, computing the VR diagram of $X \in \bR^{n \times d}$ that would reflect topological features of dimension $d' \leq d$ runs in $O(n^{d' + 2})$, making the computation quickly unpractical when $n$ increases, even when restricting to low-dimensional features as loops ($d'=1$) or cavities ($d'=2$).

\paragraph{Topological optimization.} PDs are made to be used in downstream pipelines, either as static features (e.g., for classification purposes) or as intermediate representations of $X$ in optimization schemes. 
In this work, we focus on the second problem. 
We formally consider the minimization of objective functions of the form
\begin{equation}\label{eq:topological_loss}
    L : X \in \bR^{n \times d} \mapsto \ell(\dgm(X)) \in \bR.
\end{equation}
Here, $\ell$ represents a loss function taking value from the space of persistence diagrams, denoted by $\cD$ in the following. 
The space $\cD$ can be equipped with a canonical metric, denoted by $W$ and whose formal definition is not required in this work, for which a central result is that the map $X \mapsto \dgm(X)$ is stable (Lipschitz continuous) \cite{cohen2007stability,cohen2010lipschitzStablePersistence,skraba2020wasserstein}. 
Therefore, if $\ell$ is Lipschitz continuous, so is $L$ hence it admits a gradient almost everywhere by Rademacher theorem. Building on these theoretical statements, one can consider the ``vanilla'' gradient descent update $X_{k+1} \coloneqq X_k - \lambda \nabla L(X_k)$ for a given learning-rate $\lambda > 0$ and iterate it in order to minimize \eqref{eq:topological_loss}. 
Theoretical properties of this seminal scheme (and natural extensions, e.g., stochastic) have been studied in \cite{carriere2021optimizing, leygonie2023gradient}, where convergence (to a local minimum of $L$) is proved. 

From a computational perspective, deriving $\nabla L(X)$ comes in two steps. 
Let $\mu \coloneqq \dgm(X)$, written as $\mu = \{(b_i, d_i)\,|\, i \in I\}$ for some finite set of indices $I$. 
To each $i \in I$ correspond four (possibly coinciding) points $x_{i_1}, x_{i_2}, x_{i_3}, x_{i_4}$ in the input point cloud $X$. 
Intuitively, minimizing $\mu \mapsto \ell(\mu)$ boils down to prescribe a descent direction $(\delta b_i, \delta d_i) \in \bR^2$ to each $(b_i, d_i)$ for $i \in I$, where $\delta b_i = \frac{\partial \ell}{\partial b_i}(\mu)$ and $\delta d_i = \frac{\partial \ell}{\partial d_i}(\mu)$. 
Backpropagating this perturbation to $X$ will move the corresponding points $x_{i_1}, x_{i_2}, x_{i_3}, x_{i_4}$ in order to increase or decrease the distances $\|x_{i_1} - x_{i_2}\| = b_i$ and $\|x_{i_3} - x_{i_4}\| = d_i$ accordingly. 
This yields to the following formula:
\begin{equation}\label{eq:tda_chainrule}
    \frac{\partial L}{\partial x}(X) = \sum_{i,\ x \rightarrow (b_i,d_i)} \left[\frac{\partial \ell}{\partial b_i} \cdot \frac{\partial b_i}{\partial x} + \frac{\partial \ell}{\partial d_i} \cdot \frac{\partial d_i}{\partial x}\right](X),
\end{equation}
where the notation $x \rightarrow (b_i,d_i)$ means that $x \in X$ appears in (at least) one of the four points yielding the presence of $(b_i, d_i)$ in the diagram $\mu = \dgm(X)$. 
A fundamental contribution of \cite[\S 3.3]{leygonie2021framework} is to prove that the chain rule formula \eqref{eq:tda_chainrule} is indeed valid\footnote{This is not trivial, because the intermediate space $\cD$ is only a metric space.}. 
Most of the time, a point $x \in X$ will not belong to any critical pair $(\sigma_b, \sigma_d)$ and the above gradient coordinate is $0$. Therefore, the gradient of $L$ depends on very few points of $X$, yielding the sparsity phenomenon discussed in Section~\ref{sec:intro}. 

\paragraph{Examples of common topological losses.} Let $X = (x_1,\dots,x_n) \in \bR^{n \times d}$ be a point cloud and $\dgm(X) = \{(b_i,d_i)\,|\, i\in I\}$ be its persistence diagram. 
There are several natural losses that have been introduced in the TDA literature:
\begin{itemize}
    \item Topological simplification losses: typically of the form $\sum_{i \in \tilde{I}} (b_i - d_i)^2$, where $\tilde{I} \subset I$. Such losses push (some of the) points in $\dgm(X)$ toward the diagonal $\Delta=\{b=d\}$, hence destroying the corresponding topological features appearing in $X$. 
    \item Topological augmentation losses \cite{carriere2021optimizing}: similar to simplification losses, but typically attempting to push points in $\dgm(X)$ away from $\Delta$, i.e., minimizing $-\sum_{i \in \tilde{I}} (b_i - d_i)^2$, to make topological features of $X$ more salient. As such losses are not coercive, they are usually coupled with regularization terms to prevent points in $X$ going to infinity. 
    \item Topological registration losses \cite{gameiro2016continuation}: given a target diagram $D_\mathrm{target}$, one minimizes $W(\dgm(X),D_\mathrm{target})$ where $W$ denotes a standard metric between persistence diagrams. This loss attempts to modify $X$ so that it exhibits a prescribed topological structure.  
\end{itemize}

\subsection{Diffeomorphic interpolations}

In order to overcome the sparsity of gradients appearing in TDA, we rely on diffeomorphic interpolations (see, e.g., \cite[Chapter 8]{Younes2010}).
Let $X = (x_1,\dots,x_n) \in \bR^{n \times d}$, let $I \subseteq \{1,\dots,n\}$ denote the set of indices on which $\nabla L(X)$ is non-zero and let $a_i \coloneqq (\nabla L(X))_i \in \bR^d$ for $i \in I$. 
Our goal is to find a smooth vector field $\tilde{v} : \bR^d \to \bR^d$ such that, for all $i \in I$, $\tilde{v}(x_i) = a_i$. 
To formalize this, we consider a Hilbert space $H \subset (\bR^d)^{\bR^d}$ for which the map $\delta_x^\alpha : f \mapsto \braket{\alpha, f(x)}_{\bR^d} = \alpha^T f(x)$ is continuous for any $(\alpha,x) \in \bR^d \times \bR^d$. 
Such a space is called a Reproducing Kernel Hilbert Space (RKHS)\footnote{In many applications, RKHS are restricted to spaces of functions valued in $\bR$ or $\bC$, but the theory adapts faithfully to the more general setting of vector-valued maps.}. 
A crucial property is that there exists a matrix-valued kernel operator $K : \bR^d \times \bR^d \to \bR^{d \times d}$ whose outputs are symmetric and positive definite, 
and related to $H$ through the relation $\braket{k_x^\alpha, k_y^\beta}_H = \alpha^T K(x,y) \beta$ for all $x,y,\alpha,\beta \in \bR^d$, where $k_x^\alpha \in H$ is the unique vector provided by the Riesz representation theorem such that $\braket{k_x^\alpha, f} = \braket{\alpha, f(x)}$. 
Conversely, any such kernel $K$ induces a (unique) RKHS $H$ (of which $K$ is the reproducing kernel). 
Now, we can consider the following problem:
\[\text{minimize } \|v\|_H,\ \text{s.t. } v(x_i) = a_i,\ \forall i \in I,\]
that is, we are seeking for the smoothest (lowest norm) element of $H$ that solves our interpolation problem. 
The solution $\tilde{v}$ of this problem is the projection of $0$ onto the affine set $\{v \in H\,|\, v(x_i) = a_i, \forall i \in I\}$. 
Observe that $\tilde{v}$ belongs to the orthogonal of $\{v \in H\,|\, v(x_i) = 0, \forall i \in I\}$, and thus of $\{v \in H\,|\, \braket{k_{x_i}^{\alpha_i}, v}_H = 0,\ \forall i \in I,\alpha_i \in \bR^d\}$, and therefore $\tilde{v} \in \span(\{k_{x_i}^{\alpha_i}\,|\, i \in I\})$. 
This justifies to search for $\tilde{v}$ in the form of $\tilde{v}(x) = \sum_{i\in I} K(x,x_i) \alpha_i$, and the interpolation that it must satisfy yields
$\tilde{v}(x) = \sum_{i \in I} K(x,x_i) (\bK^{-1} a)_i,$
where $\bK$ is the block matrix $(K(x_i,x_j))_{i,j \in I}$ and $a = (a_i)_{i \in I}$. See also~\cite[Theorem 8.8]{Younes2010}.
In particular, it is important to note that $\tilde{v}$ inherits from the regularity of $K$ and will typically be a diffeomorphism in this work. 
If $K$ is the Gaussian kernel defined by $K(x,y) \coloneqq \exp\left(-\frac{\|x-y\|^2}{2 \sigma^2}\right) I_d$ for some bandwidth $\sigma > 0$, a choice to which we stick to in the rest of this work,  
the expression of $\tilde{v}$ reduces to 
\begin{equation}\label{eq:expression_diffeo_simplified}
    \tilde{v}(x) = \sum_{i \in I} \rho_\sigma(\|x - x_i\|)  \alpha_i,
\end{equation}
where $\rho_\sigma(u) \coloneqq e^{-\frac{u^2}{2\sigma^2}}$, and $\alpha_i \coloneqq  (\BK^{-1} a)_i$ with $\BK = (\rho_\sigma(\|x_i - x_j\|)I_d)_{i,j \in I}$. 
Note that $\tilde{v}$ can be understood as the convolution of $a$ with a Gaussian kernel, but involving a correction $\BK^{-1}$ guaranteeing that after the convolution, the interpolation constraint is satisfied. 
We will call $\tilde v$ the {\em diffeomorphic interpolation} associated to the vectors and indices $\{a_i\,|\,i\in I\}$.

\section{Diffeomorphic interpolation of the vanilla topological gradient}
\label{sec:methodo}

\begin{wrapfigure}{R}{0.35\textwidth}
\center 
\includegraphics[width=0.35\textwidth]{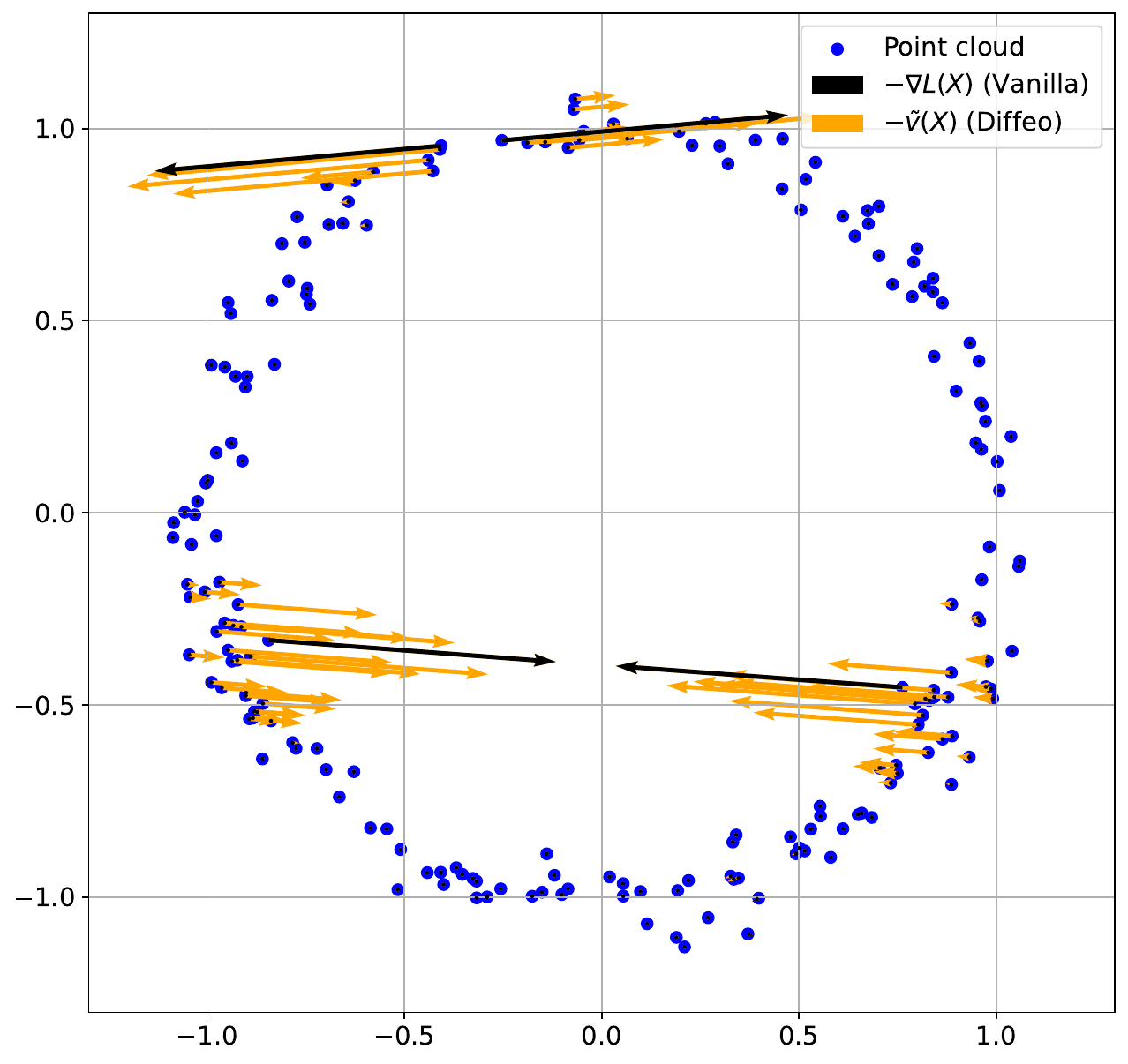} \caption{(blue) A point cloud $X$, and (black) the negative gradient $-\nabla L(X)$ of a simplification loss which aims at destroying the loop 
by collapsing the circle (reduce the loop's death time) and tearing it (increase the birth time). While $\nabla L(X)$ only affects four points in $X$, the diffeomorphic interpolation $\tilde{v}(X)$ (orange, $\sigma=0.1$) is defined on $\bR^d$, hence extends smoothly to other points in $X$.}
\vspace{-.9cm}
\end{wrapfigure}

\subsection{Methodology} 
We aim at minimizing a loss function $L : X \mapsto \ell(\dgm(X))$ as in \eqref{eq:topological_loss}, starting from some initialization $X_0$,
and assuming that $L$ is lower bounded (typically by $0$) and locally semi-convex.
This assumption is typically satisfied by the topological losses $\ell$ introduced in \cref{subsec:background_tda}. 
Gradient descents implemented in practice are (explicit) discretization of the \emph{gradient flow}
\begin{equation}\label{eq:euclidean_gradient_flow}
    \frac{\dd X}{\dd t} \in - \partial L(X(t)),\quad X(0) = X_0,
\end{equation}
where $\partial L(X) \coloneqq \{ v\,|\, L(Y) \geq L(X) + v\cdot (Y - X) + o(Y - X)  \text{ for all } X,Y\}$ denotes the subdifferential of $L$ at $X$. 
Note that a topological loss $L$ is typically {\em not} differentiable everywhere, since the map $X \mapsto \dgm(X)$ is differentiable almost everywhere but not in $C^{1,1}$. 
However, uniqueness of the gradient flow on a maximal interval $[0,+\infty[$ is guaranteed if $L$ is lower bounded and locally semi-convex \cite[\S B.1]{chizat2018global}.

In this work, we propose to use the dynamic described by the diffeomorphism $\tilde{v}_t$ introduced in \eqref{eq:expression_diffeo_simplified} interpolating the current vanilla topological gradient $\nabla L(X_t)$ at each time $t$, formally considering solutions $\tilde{X}$ of
\begin{equation}\label{eq:diffeo_flow}
    \frac{\dd \tilde{X}}{\dd t} = -\tilde{v}_t(\tilde{X}(t)),\quad \tilde{X}(0) = X_0.
\end{equation}
Here, slightly overloading notation, $\tilde{v}_t(\tilde{X}(t))$ denotes the $n \times d$ matrix where the $i$-th line is given by $\tilde{v}_t(\tilde{X}(t)_i)$. 
The \emph{flow} at time $T$ associated to \eqref{eq:diffeo_flow} is the map 
\begin{equation} \label{eq:expression_flow_phi}
    \varphi_T : x_0 \mapsto x_0 - \int_0^T \tilde{v}_t(x(t)) \dd t, \quad \dot x(t) = -\tilde{v}_t(x(t)),\ x(0) = x_0,
\end{equation}
which can inverted by simply following the flow backward (i.e., by following $\tilde{v}_t$ instead of $-\tilde{v}_t$). 
We now 
guarantee that at each time $t$, following $\tilde{v}_t$ instead of the vanilla topological gradient $\nabla L(X_t)$ still provides a descent direction for the topological loss $L$. 
\begin{proposition} For each $t \geq 0$, it holds that
$
\frac{\dd L(\tilde{X}(t))}{\dd t} = - \| \nabla L(\tilde{X}(t))\|^2 \leq 0.
$
\end{proposition}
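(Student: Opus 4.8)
The plan is to differentiate $t \mapsto L(\tilde{X}(t))$ by the chain rule and then exploit the defining interpolation property of $\tilde{v}_t$. First I would fix a time $t$ at which $L$ is differentiable at $\tilde{X}(t)$ and note that $\tilde{X}$ is $C^1$ in $t$ (since it solves \eqref{eq:diffeo_flow}, whose right-hand side is smooth for the Gaussian kernel). The chain rule then gives
\[
\frac{\dd L(\tilde{X}(t))}{\dd t} = \langle \nabla L(\tilde{X}(t)), \dot{\tilde{X}}(t)\rangle = -\langle \nabla L(\tilde{X}(t)), \tilde{v}_t(\tilde{X}(t))\rangle,
\]
where $\langle \cdot,\cdot\rangle$ is the Frobenius inner product on $\bR^{n\times d}$ and the last equality uses \eqref{eq:diffeo_flow}.

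Next I would decompose this inner product row by row. Recalling that the $i$-th row of $\tilde{v}_t(\tilde{X}(t))$ is $\tilde{v}_t(\tilde{X}(t)_i)$, I get
\[
\langle \nabla L(\tilde{X}(t)), \tilde{v}_t(\tilde{X}(t))\rangle = \sum_{i=1}^n \langle \nabla L(\tilde{X}(t))_i, \tilde{v}_t(\tilde{X}(t)_i)\rangle.
\]
The key observation is that the index set $I = I_t$ on which $\tilde{v}_t$ interpolates is \emph{exactly} the support of the gradient. Hence for $i \in I$ the interpolation constraint $\tilde{v}_t(\tilde{X}(t)_i) = \nabla L(\tilde{X}(t))_i$ turns the $i$-th summand into $\|\nabla L(\tilde{X}(t))_i\|^2$, whereas for $i \notin I$ the factor $\nabla L(\tilde{X}(t))_i$ vanishes by definition of $I$, so that summand is $0 = \|\nabla L(\tilde{X}(t))_i\|^2$ as well. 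In both cases the $i$-th term equals $\|\nabla L(\tilde{X}(t))_i\|^2$, and summing yields $\|\nabla L(\tilde{X}(t))\|^2$, which gives the claimed identity and the sign.

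I expect the only genuine subtlety to be the differentiability of $L$: since $X \mapsto \dgm(X)$, and therefore $L$, is only differentiable almost everywhere and not $C^{1,1}$, the chain rule in the first step is legitimate only at times $t$ for which $\tilde{X}(t)$ avoids the non-differentiability locus, and the symbol $\nabla L(\tilde{X}(t))$ is only meaningful there. To handle this I would either restrict the statement to such (generic) times, or invoke the gradient-flow regularity guaranteed by lower-boundedness and local semi-convexity recalled after \eqref{eq:euclidean_gradient_flow} to ensure $t \mapsto L(\tilde{X}(t))$ is differentiable so that the chain rule applies. Everything else is an elementary row-wise computation; the entire conceptual content lies in the fact that interpolating $\nabla L$ precisely on its support makes the directional derivative of $L$ along $\tilde{v}_t$ coincide with the one along $\nabla L$ itself.
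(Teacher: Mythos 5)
Your argument is exactly the paper's proof: apply the chain rule along the flow \eqref{eq:diffeo_flow}, decompose the Frobenius inner product row by row, and use that $\tilde{v}_t$ interpolates $\nabla L(\tilde{X}(t))$ precisely on the support $I$ of the gradient while the remaining rows vanish. Your added caveat about restricting to times where $L$ is differentiable (or invoking the semi-convexity/lower-boundedness assumptions) is a reasonable refinement that the paper's one-line proof leaves implicit, but it does not change the route.
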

\begin{proof}
One has
$\frac{\dd L(\tilde{X}(t))}{\dd t} = - \braket{\nabla L(\tilde{X}(t)) , \tilde{v}_t(\tilde{X}(t))} = - \sum_{i=1}^n (\nabla L(\tilde{X}(t)))_i \cdot (\tilde{v}_t(\tilde{X}(t)))_i$.
Since
$\nabla L(\tilde{X}(t))_i = 0$ for $i \not\in I$, and  
$\tilde{v}_t(\tilde{X}(t))_i = -\nabla L(\tilde{X}(t))_i$ for $i \in I$, the result follows.
\end{proof}

Moreover, it is also possible to upper bound the smoothness, i.e., the Lipschitz constant, of $\tilde v$: 

\begin{proposition}\label{prop:LipschitzCste}
Let $L$ be the simplification or augmentation loss computed with $k=|\tilde I|$ PD points, as defined at the end of Section~\ref{subsec:background_tda}.
Let $\tilde v=\tilde v_t$ be the diffeomorphic interpolation associated to the vanilla topological gradient at time $t\geq 0$.
Then, one has, $\forall x,y\in\bR^d$ and $t\geq 0$:
$$\|\tilde v(x)-\tilde v(y)\|_2\leq\|\tilde v(x)-\tilde v(y)\|_1 \leq C_d \cdot\sigma^{d-1}\cdot\kappa(\BK)\cdot{\rm Pers}_k(\Dgm(\tilde X(t)))\cdot  \|x-y\|_2,$$
where $C_d=\sqrt{d}\cdot 2^{3+\frac{d+1}{2}}\cdot\pi^{\frac{d-1}{2}}$, $\kappa(\BK)$ is the condition number of $\BK$, and ${\rm Pers}_k(\Dgm(\tilde X(t)))$ is the sum of the $k$ largest distances to the diagonal 
in $\Dgm(\tilde X(t))$.
\end{proposition}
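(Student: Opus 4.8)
The plan is to bound the global Lipschitz constant of $\tilde v=\tilde v_t$ by the supremum over $\bR^d$ of the operator norm of its Jacobian. The left-hand inequality $\|\tilde v(x)-\tilde v(y)\|_2\le\|\tilde v(x)-\tilde v(y)\|_1$ is the elementary comparison of the $\ell^2$ and $\ell^1$ norms on $\bR^d$, so all the work is in the right-hand inequality. Since the Gaussian kernel is $C^\infty$, the interpolant \eqref{eq:expression_diffeo_simplified} is $C^1$, and the mean value inequality gives $\|\tilde v(x)-\tilde v(y)\|_1\le\big(\sup_{z}\|D\tilde v(z)\|_{2\to1}\big)\,\|x-y\|_2$, where $\|\cdot\|_{2\to1}$ is the operator norm from $(\bR^d,\|\cdot\|_2)$ to $(\bR^d,\|\cdot\|_1)$. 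First I would differentiate \eqref{eq:expression_diffeo_simplified}: writing $\nabla_z\rho_\sigma(\|z-x_i\|)=-\sigma^{-2}(z-x_i)\,e^{-\|z-x_i\|^2/(2\sigma^2)}$, the Jacobian is the sum of rank-one matrices $D\tilde v(z)=\sum_{i\in I}\alpha_i\,\nabla_z\rho_\sigma(\|z-x_i\|)^\top$.

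Next I would bound this sum. Each rank-one term has $2\to1$ operator norm equal to $\|\alpha_i\|_1\,\|\nabla_z\rho_\sigma(\|z-x_i\|)\|_2$, so by the triangle inequality $\|D\tilde v(z)\|_{2\to1}\le\sum_{i\in I}\|\alpha_i\|_1\,g_\sigma(\|z-x_i\|)$, with $g_\sigma(r)=\sigma^{-2}r\,e^{-r^2/(2\sigma^2)}$. This is where the dimensional constant $C_d$ and the power $\sigma^{d-1}$ should enter: the aggregate size of the Gaussian gradient over $\bR^d$ is governed by $\int_{\bR^d}\|\nabla_u\rho_\sigma(\|u\|)\|\,\mathrm{d}u$, which in polar coordinates equals $2^{(d+1)/2}\pi^{d/2}\sigma^{d-1}\,\Gamma(\tfrac{d+1}{2})/\Gamma(\tfrac d2)$. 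This single computation already produces the factors $\sigma^{d-1}$, $\pi^{(d-1)/2}$ and $2^{(d+1)/2}$ visible in $C_d$, with the remaining $\sqrt d$ and $2^{3}$ coming from the $\ell^1$-versus-$\ell^2$ conversions ($\|\alpha_i\|_1\le\sqrt d\,\|\alpha_i\|_2$) and from the slack accumulated in the triangle-inequality steps.

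It then remains to convert the coefficient norm $\|\alpha\|=\|\BK^{-1}a\|$ into $\kappa(\BK)$ and $\|a\|$ into $\Pers_k$. For the former, $\|\BK^{-1}\|_{\mathrm{op}}=1/\lambda_{\min}(\BK)$, and rewriting $1/\lambda_{\min}(\BK)=\kappa(\BK)/\lambda_{\max}(\BK)$ lets the condition number appear once $\lambda_{\max}(\BK)$ is matched against the spatial aggregation of the Gaussian gradients. For the latter, I would invoke the chain rule \eqref{eq:tda_chainrule}: for the simplification/augmentation loss $\pm\sum_{i\in\tilde I}(b_i-d_i)^2$ one has $\partial\ell/\partial b_i=\pm 2(b_i-d_i)$ and $\partial\ell/\partial d_i=\mp 2(b_i-d_i)$, while each $\partial b_i/\partial x$ and $\partial d_i/\partial x$ is a unit vector (gradient of a Euclidean edge length). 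Hence each of the at most four critical points attached to $(b_i,d_i)$ receives a gradient of norm at most $2|b_i-d_i|=2\sqrt 2$ times the distance of $(b_i,d_i)$ to the diagonal, and summing over the $k=|\tilde I|$ active points yields $\|a\|\lesssim\Pers_k(\Dgm(\tilde X(t)))$.

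The main obstacle I anticipate is precisely the junction between these two halves: passing from the pointwise sum $\sum_{i\in I}\|\alpha_i\|_1\,g_\sigma(\|z-x_i\|)$, which a priori scales with the number of critical points, to a bound depending only on $\sigma^{d-1}$ and $\kappa(\BK)$. Making this clean requires controlling $\sup_z\sum_{i}g_\sigma(\|z-x_i\|)$ by the continuous Gaussian-gradient integral and simultaneously matching the emerging $\lambda_{\max}(\BK)$ against $1/\lambda_{\min}(\BK)$, so that the final constant collapses to $C_d\,\sigma^{d-1}\,\kappa(\BK)$; the bookkeeping of the $\Gamma$-function ratio into $C_d$ and the verification that no residual $|I|$-dependence survives are the delicate points, whereas the differentiation, the rank-one norm identity, and the persistence estimate are routine.
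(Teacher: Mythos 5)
Your overall architecture is the right one (Lipschitz constant of the kernel part, times a norm of the coefficient vector $\alpha=\BK^{-1}a$, times a bound on $\|a\|=\|\nabla L\|$ via the chain rule), and your treatment of the persistence factor is essentially the paper's: $\partial\ell/\partial b_i=\pm2(b_i-d_i)$, unit-vector derivatives of edge lengths, at most four critical points per PD point. But the step you yourself flag as the ``main obstacle'' is a genuine gap, and your proposed route through it does not work. You want to control $\sup_z\sum_{i\in I}\|\alpha_i\|_1\,g_\sigma(\|z-x_i\|)$ by the continuous integral $\int_{\bR^d}\|\nabla_u\rho_\sigma(\|u\|)\|\,\dd u$; this is false without a separation assumption on the critical points $x_i$ (if they coincide, the sup of the sum is $|I|\cdot\sup g_\sigma$, and no integral sees the multiplicity). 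Relatedly, the pointwise quantity $\sup_r g_\sigma(r)=e^{-1/2}/\sigma$ scales like $\sigma^{-1}$, so the factor $\sigma^{d-1}$ cannot emerge from a pointwise Jacobian bound plus an integral heuristic; and your plan to ``match $\lambda_{\max}(\BK)$ against the spatial aggregation of the Gaussian gradients'' to produce $\kappa(\BK)$ is not substantiated.

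The paper's proof avoids all of this by never taking a supremum of a sum of Gaussian bumps. It bounds each term $|\rho_\sigma(\|x-x_i\|)-\rho_\sigma(\|y-x_i\|)|$ uniformly in $i$ by $C_{d,\sigma}\|x-y\|_2$ with $C_{d,\sigma}=2^{\frac{d+1}{2}}\pi^{\frac{d-1}{2}}\sigma^{d-1}$ (a cited Lipschitz estimate for the Gaussian, \cite[Theorem 8]{adams2017persistenceImages}, which is where $\sigma^{d-1}$ and the $\pi$, $2$ powers come from --- they are imported, not derived from an integral), so that the sum over $i$ collapses exactly to $\sum_i\|\alpha_i\|_1=\|\BK^{-1}a\|_1\le\|\BK^{-1}\|_1\|a\|_1$ with no residual $|I|$-dependence. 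The condition number then enters elementarily: $\|\BK^{-1}\|_1=\kappa(\BK)/\|\BK\|_1\le\kappa(\BK)$ because the Gaussian kernel matrix has unit diagonal, hence $\|\BK\|_1\ge1$ --- no eigenvalue matching needed. Finally the constants you attribute to ``slack'' are precise: $2^3=2\times4$ (the $2$ from differentiating $(b_i-d_i)^2$, the $4$ from the four critical points per PD point) and $\sqrt d$ from $\|u\|_1\le\sqrt d\,\|u\|_2$ applied to the unit gradients of the edge lengths, giving $\|\nabla L\|_1\le8\sqrt d\,\Pers_k(\Dgm(\tilde X(t)))$. To repair your write-up you would either have to adopt the paper's term-by-term kernel Lipschitz bound (abandoning the integral argument), or add and justify a separation hypothesis on the critical points, which the statement does not assume.
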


The proof is deferred to Appendix~\ref{app:proof}. This upper bound can be used to quantify how smooth the diffeomorphic interpolation is (when computed from topological losses) as close points can still have significantly different gradients if the Lipschitz constant is large. Overall, smoothness is affected by 
the dimension $d$ (curse of dimensionality), the bandwidth of the Gaussian kernel $\sigma$ (as kernels with large $\sigma$ can affect more points),  
the condition number $\kappa(\BK)$ (ill-conditioned kernel matrices are more sensitive), and the distances to the diagonal in the current PD (larger topological features lead to larger vanilla topological gradients 
for simplification or augmentation losses).

\subsection{Subsampling techniques to scale topological optimization}
\label{subsec:subsampling}

As a consequence of the limited scaling of the Vietoris-Rips filtration with respect to the number of points $n$ of the input point cloud $X$, it often happens in practical applications that computing the VR diagram $\dgm(X)$ of a large point set $X$ (\textit{a fortiori} its gradient) turns out to be intractable. 
A natural workaround is to randomly sample $s$-points from $X$, with $s \ll n$, yielding a smaller point cloud $X' \subset X$. 
Provided that the Hausdorff distance between $X'$ and $X$ is small, the stability theorem \cite{cohen2010lipschitzStablePersistence, cohen2007stability, chazal2014persistenceStabilityAdvanced} ensures that $\dgm(X')$ is close to $\dgm(X)$. 
See \cite{Chazal2013, Chazal2015, chazal2015subsampling} for an overview of subsampling methods in TDA. 

However, the sparsity of vanilla topological gradients computed from topological losses strikes further when relying on subsampling: only a tiny fraction of the seminal point cloud $X$ is likely to be updated at each gradient step. 
In contrast, using the diffeomorphic interpolation $\tilde{v}$ (of the vanilla topological gradient) computed on the subsample $X'$ still provides a vector field defined on the whole input space $\bR^d$, in particular on each point of $X$ and the update can then be performed in linear time with respect to $n$. 
This yields \cref{alg:algo}. 
\cref{fig:qualitative_subsampling} illustrates the qualitative benefits offered by the joint use of subsampling and diffeomorphic interpolations when compared to vanilla topological gradients. 
A larger-scale experiment is provided in \cref{sec:expe}.

\begin{algorithm}
\caption{Diffeomorphic gradient descent for topological loss functions with subsampling}
\label{alg:algo}
\begin{algorithmic}
\STATE \textbf{Input:} Initial $X_0 \in \bR^{n \times d}$, loss function $\ell$, learning rate $\lambda >0$, subsampling size $s \in \{1,\dots,n\}$, max.~epoch $T \geq 1$, stopping criterion. 
\STATE Set $L : X \mapsto \ell(\dgm(X))$ (+ possibly a regularization term in $X$). 
\FOR{$k = 1,\dots,T$}
     \STATE Subsample $X'_{k-1} = \{x_1',\dots,x_s'\}$ uniformly from $X_{k-1}$.
 \STATE Compute $\nabla L(X'_{k-1})$ (vanilla topological gradient)
 \STATE Compute the diffeomorphic interpolation $\tilde{v}(X'_{k-1})$ from $\nabla L(X'_{k-1})$ using \eqref{eq:expression_diffeo_simplified}.
 \STATE Set $X_t \coloneqq X_{k-1} - \lambda \tilde{v}(X_{k-1})$. 
 \IF{stopping criterion is reached}
     \STATE \textbf{Return} $X_k$
 \ENDIF
\ENDFOR
\STATE \textbf{Return $X_T$}
\end{algorithmic}
\end{algorithm}

\begin{figure} \centering
\begin{subfigure}[b]{0.23\textwidth}
\includegraphics[width=\textwidth]{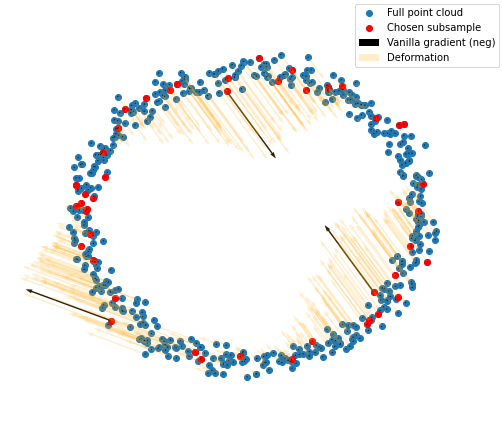}
\caption{$t=0$}
\end{subfigure}
\begin{subfigure}[b]{0.23\textwidth}
\includegraphics[width=\textwidth]{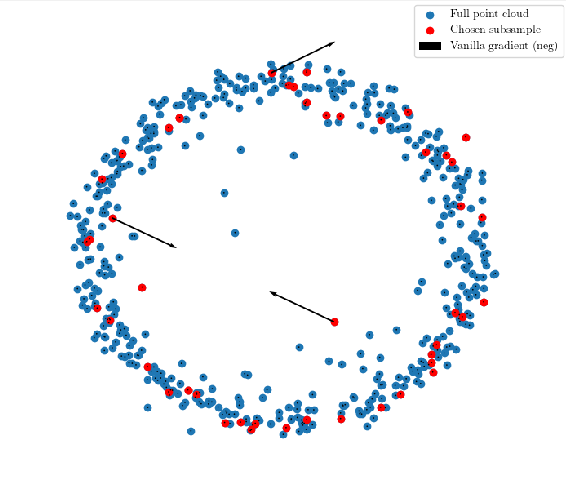}
\caption{$t=100$ (Vanilla)}
\end{subfigure}
\begin{subfigure}[b]{0.23\textwidth}
\includegraphics[width=\textwidth]{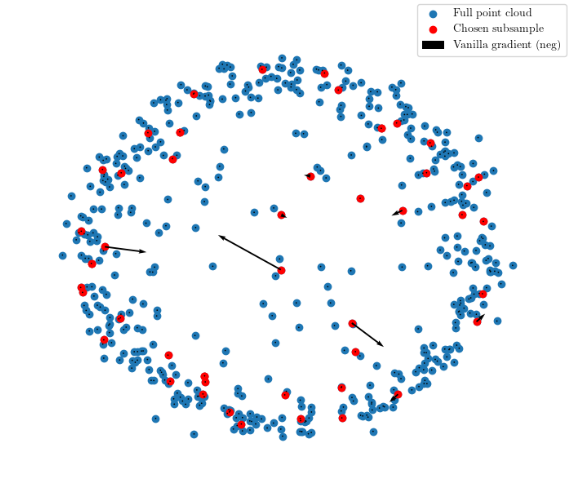}
\caption{$t=500$ (Vanilla)}
\end{subfigure}
\begin{subfigure}[b]{0.25\textwidth}
\includegraphics[width=\textwidth]{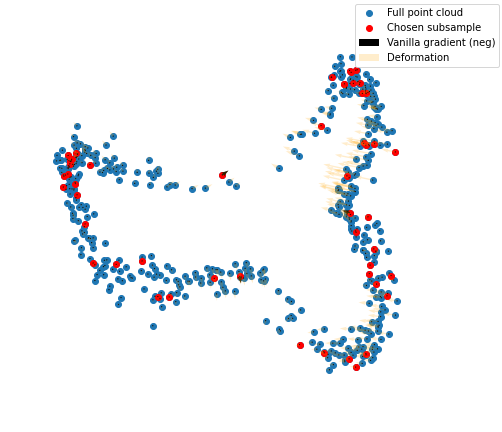}
\caption{$t=100$ (Diffeo)}
\end{subfigure}
\caption{Showcase of the usefulness of subsampling combined with diffeomorphic interpolations to minimize a topological simplification loss, with parameters $\lambda = 0.1$, $s=50$, $n=500$. $(a)$ Initial point cloud $X$ (blue), subsample $X'$ (red), vanilla topological gradient on the subsample (black) and corresponding diffeomorphic interpolation (orange). $(b)$ and $(c)$, the point cloud $X_t$ after running $t=100$ and $t=500$ steps of vanilla gradient descent. $(d)$ the point cloud $X_t$ after running $t=100$ steps of diffeomorphic gradient descent.}
\label{fig:qualitative_subsampling}
\end{figure}

\paragraph{Stopping criterion.} A natural stopping criterion for \cref{alg:algo} is to assess whether the loss $L(X_t) = \ell(\dgm(X_t))$ is smaller than some $\epsilon > 0$. 
However, computing $\dgm(X_t)$ can be intractable if $X_t$ is large. 
Therefore, a tractable loss to consider is $\hat{L}(X_t) \coloneqq \bE[\ell(\dgm(X'_t))]$, where $X'_t$ is a uniform $s$-sample from $X_t$. 
Under that perspective, \cref{alg:algo} can be re-interpreted as a kind of stochastic gradient descent on $\hat{L}$, for which two standard stopping criteria can be used: $(a)$ compute an exponential moving average of the loss on individual samples $X'_t$ over iterations, or $(b)$ compute a validation loss, i.e., sample $X'_{t,(1)},\dots,X'_{t,(K)}$ and estimate $\hat{L}$ by $K^{-1} \sum_{k=1}^K \ell(\dgm(X'_{t,(k)}))$. 
Empirically, we observe that the latter approach with $K = n / s$ (more repetitions for smaller sample sizes to mitigate variance) yields the most satisfactory results (faster convergence toward a better objective $X_t$) 
overall, and thus stick to this choice in our experiments. 

\section{Numerical experiments}
\label{sec:expe}

We provide numerical evidence for the strength of our diffeomorphic interpolations. 
PH-related computations relies on the library \texttt{Gudhi}~\cite{gudhi} and automatic differentiation relies on \texttt{tensorflow}~\cite{tensorflow2015-whitepaper}. 
The ``big-step gradient'' baseline \cite{nigmetov2024topological} implementation is based on \texttt{oineus}\footnote{\url{https://github.com/anigmetov/oineus}}. 
The first two experiments 
were run on a \texttt{11th Gen Intel(R) Core(TM) i5-1135G7 @ 2.40GHz}, 
the last one on a \texttt{2x Xeon SP Gold 5115 @ 2.40GHz}.

\paragraph{Convergence speed and running times.} 
We sample uniformly $N=200$ points on a unit circle in $\bR^2$ with some additional Gaussian noise, and then minimize the simplification loss $L : X \mapsto \sum_{(b,d) \in \dgm(X)} |d|^2$, which attempts to destroy the underlying topology in $X$ by reducing the death times of the loops 
by collapsing the points. 
The respective gradient descents are iterated over a maximum of 250 epochs, possibly interrupted before if a loss of $0$ is reached ($\dgm(X)$ is empty), with a same learning rate $\lambda = 0.1$. 
The bandwidth of the Gaussian kernel in \eqref{eq:expression_diffeo_simplified} is set to $\sigma = 0.1$. 
We include the competitor \texttt{oineus} \cite{nigmetov2024topological}, as---even though relying on a fairly different construction---this method shares a key idea with ours: extending the vanilla gradient to move more points in $X$. 
We stress that both approaches can be used in complementarity: compute first the ``big-step gradient'' of \cite{nigmetov2024topological} using \texttt{oineus}, and then extend it by diffeomorphic interpolation.
Results are displayed in \cref{fig:simple_benchmark}. 
In terms of loss decrease over \emph{iterations}, both ``big-step gradients" and our diffeomorphic interpolations significantly outperform vanilla topological gradients, and their combined use yields the fastest convergence (by a slight margin over our diffeomorphic interpolations alone). 
However, in terms of raw running times, the use of \texttt{oineus} involves a significant computational overhead,
making our approach the fastest to reach convergence by a significant margin. 

\begin{figure}[h] 
\center 
\includegraphics[width=\textwidth]{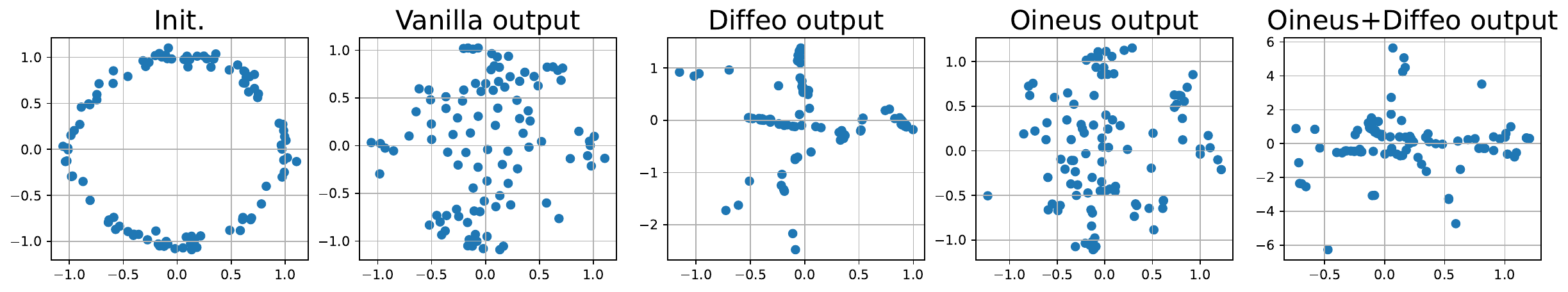}
\includegraphics[width=\textwidth]{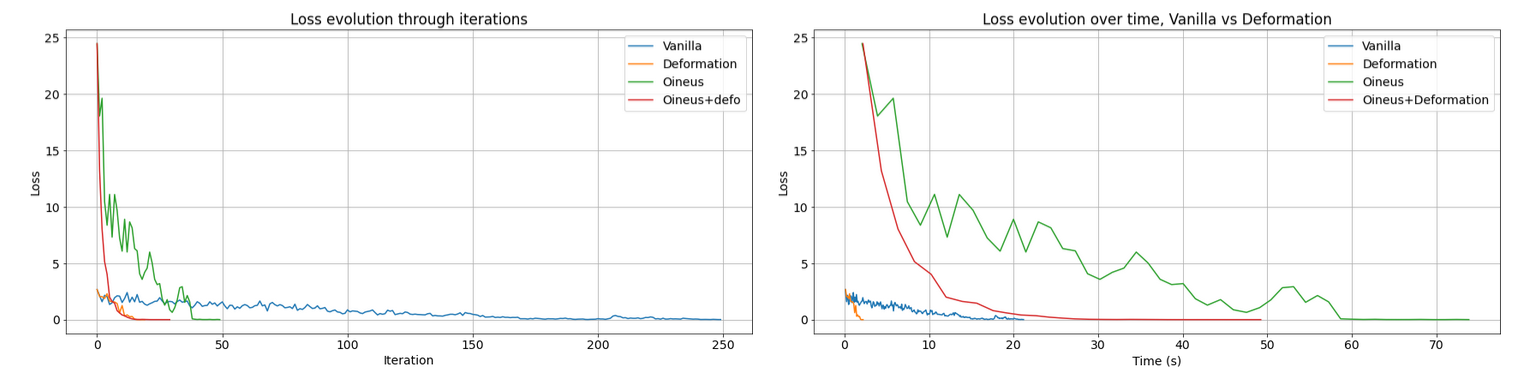} 
\caption{(Top) From left to right: initial point cloud, and final point cloud for the different flows. (Bottom) Evolution of the loss with respect to the number of iterations and with respect to running time.}
\label{fig:simple_benchmark}
\end{figure}

\paragraph{Subsampling.} 
We now showcase how using our diffeomorphic interpolation jointly with subsampling routines (\cref{alg:algo}) allows to perform topological optimization on point clouds with thousands of points, a new scale in the field. 
For this, we consider the vertices of the Stanford Bunny \cite{turk1994zippered}, yielding a point cloud $X_0 \in \bR^{n \times d}$ with $n=35,947$ and $d=3$. 
We consider a topological augmentation loss (see \cref{subsec:background_tda}) for two-dimensional topological features, i.e., we aim at increasing the persistence of the bunny's cavity. 
The size of $n$ makes the computation of $\dgm(X_0)$ untractable (recall that it scales in $O(n^4)$); we thus rely on subsampling with sample size $s=100$ and compare the vanilla gradient descent scheme and our scheme described in \cref{alg:algo}. 
\begin{figure} \center 
\includegraphics[width=0.15\textwidth]{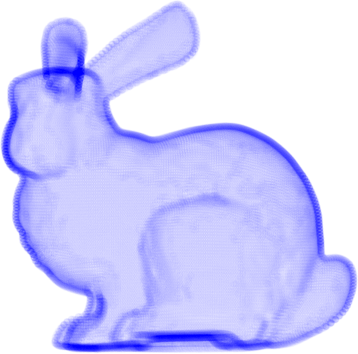} 
\includegraphics[width=0.15\textwidth]{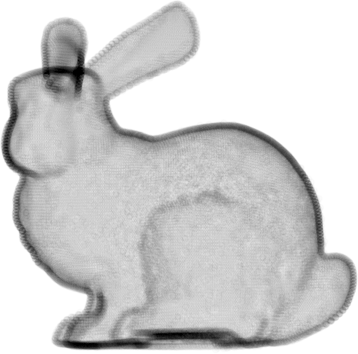}
\includegraphics[width=0.15\textwidth]{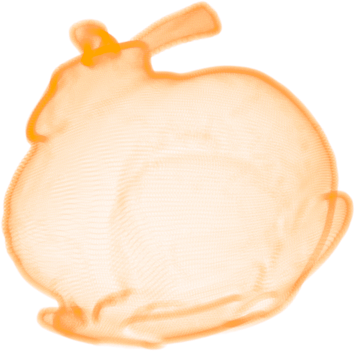}
\includegraphics[width=0.15\textwidth]{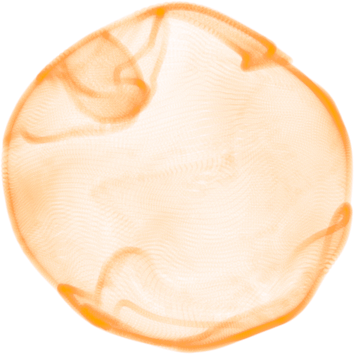}
\includegraphics[width=0.3\textwidth]{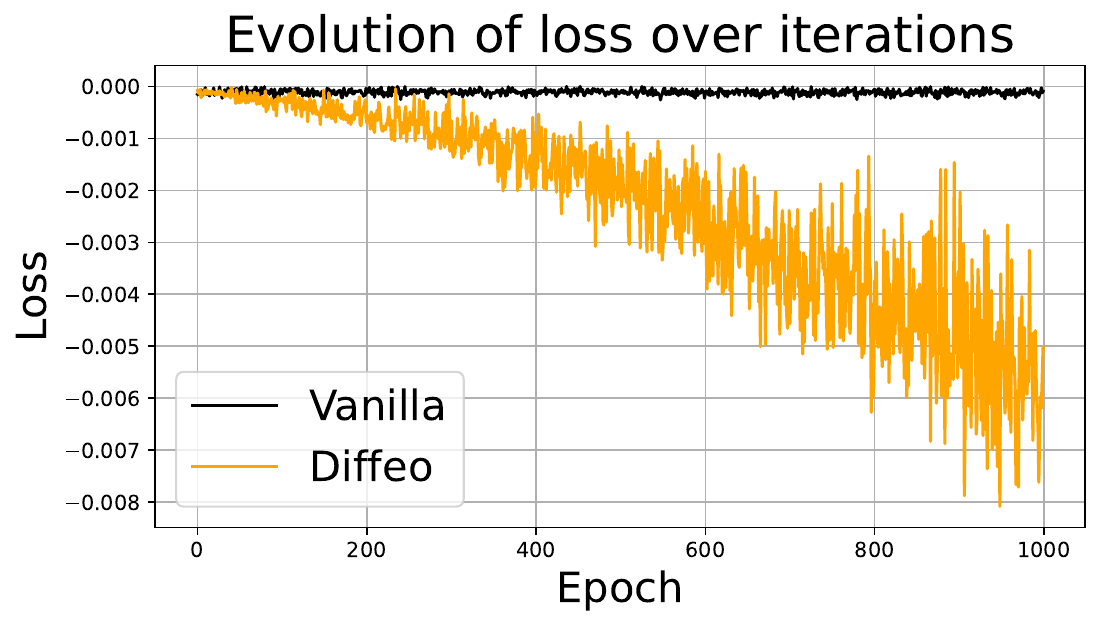} 
\caption{From left to right: initial Stanford bunny $X_0$, the point cloud after $1,000$ epochs of vanilla topological gradient descent (barely any changes), the point cloud after 200 epochs of diffeomorphic gradient descent, after 1,000 epochs, and eventually the evolution of losses for both methods over iterations.} \label{fig:bunny}
\end{figure}
Results are displayed in \cref{fig:bunny}. 
Because it only updates a tiny fraction of the initial point cloud at each iteration, the vanilla topological gradient with subsampling barely changes the point cloud (nor decreases the loss) in 1,000 epochs. 
In sharp contrast, as our diffeomorphic interpolation computed on subsamples is defined on $\bR^3$, it updates the whole point cloud at each iteration, making possible to decrease the objective function where the vanilla gradient descent is completely stuck. 
Note that a step of diffeomorphic interpolation, in that case, takes about $10$ times longer than a vanilla step. An additional subsampling experiment can be found in Appendix~\ref{app:expe}.

\paragraph{Black-box autoencoder models.} 
Finally, we apply our diffeomorphic interpolations 
to black-box autoencoder models. 
In their simplest formulation, autoencoders (AE) can be summarized as two maps $E : \bR^d \to \bR^{d'}$ and $D: \bR^{d'} \to \bR^d$ called encoder and decoder respectively.
The intermediate space $\bR^{d'}$ in which the encoder $E$ is valued is referred to as a \emph{latent space} (LS), with typically $d' \ll d$. 
In general, without further care, there is no reason to expect that the LS of a point cloud $X$, $E(X) = \{E(x_1),\dots,E(x_n)\}$, reflects any geometric or topological properties of $X$. 
While this can be mitigated by adding a 
topological regularization term 
to the loss function \emph{during the training} of the autoencoder \cite{Moor2020, carriere2021optimizing}, this 
{\em cannot} work in the setting where
one is given a \emph{black-box, pre-trained} AE. 
However, replacing $(E,D)$ by $(\varphi \circ E, D \circ \varphi^{-1})$ for any invertible map $\varphi : \bR^{d'} \to \bR^{d'}$ yields an AE producing the same outputs yet changing the LS $E(X)$, without explicit access to the AE's model. 
Hence, we propose to learn such a $\varphi$ with diffeomorphic interpolations: given some latent space $E(X)$, we apply $T$ steps of our diffeomorphic gradient descent algorithm to $X \mapsto \ell(\dgm(X))$ initialized at $E(X)$. 
We thus get a sequence of smooth displacements $-\tilde{v}_1,\dots,-\tilde{v}_T$ of $\bR^{d'}$ that discretizes the flow \eqref{eq:expression_flow_phi} via $\varphi : x_0 \mapsto x_0 - \sum_{k=1}^T \tilde{v}_k(x_{k-1})$ where $x_k - x_{k-1} = -\tilde{v}_k(x_{k-1})$,
and such that 
$\dgm(\varphi(E(X)))$ is more topologically satisfying. 
This fixed diffeomorphism $\varphi$ can then be re-applied to any new data coming out of the encoder in a deterministic way.
Moreover, any random sample from the topologically-optimized LS can be inverted without further computations by following $\tilde{v}_T, \tilde{v}_{T-1},\dots,\tilde{v}_1$,
which allows to push the new sample back to the initial LS, and then apply the decoder on it. 
Again, this cannot be achieved with baselines~\cite{solomon2021fast, nigmetov2024topological}.

In order to illustrate these properties, we trained a variational autoencoder (VAE) to project a family of datasets of images representing rotating objects, named \texttt{COIL}~\cite{Nene1996}, to two-dimensional latent spaces. 
Given that every dataset in this family
is comprised of $288$ pictures of the same object taken with different angles, one can impose a prior on the topology of the corresponding LSs, namely that they are sampled from circles.
\begin{wrapfigure}{R}{0.4\textwidth}
\centering
\vspace{-0.8cm}
\includegraphics[width=.4\textwidth]{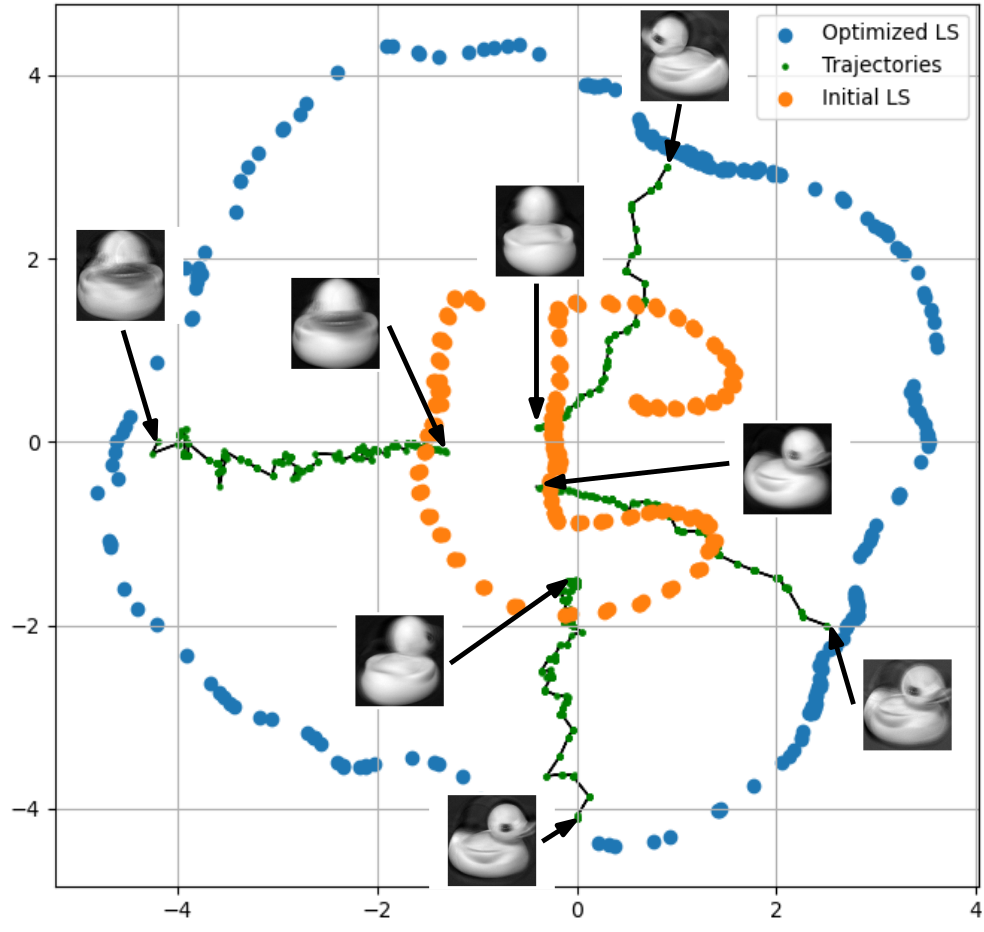}
\caption{\label{fig:gencoil}
As the four samples from the topologically-optimized LS (blue) are far from the initial LS (orange), the decoded images are fuzzy. However, reverting $\varphi$ and following the corresponding green trajectories 
allows to 
render good-looking images.}
\vspace{-2.3cm}
\end{wrapfigure}
However, the VAE architecture is shallow (the encoder has one fully-connected layer (100 neurons), and the decoder has two (50 and 100 neurons), all layers use ReLu activations), and thus the learned latent spaces, although still looking like curves thanks to
 continuity, do not necessarily display circular patterns. This makes generating new data more difficult, as latent spaces are harder to interpret.
To improve on this, we learn a flow $\varphi$ as described above 
with an augmentation loss associated to the 1-dimensional PD point which is the most far away from the diagonal,
in order to force latent spaces to have significant one-dimensional Vietoris-Rips persistent homology. As the datasets are small, we do not use subsampling, and we use learning rate $\lambda=0.1$, Gaussian kernels with bandwidth $\sigma=0.3$ and an increase of at least $3.$ in the topological loss (from an iteration to the next) to stop the algorithm\footnote{As we noticed that $3.$ was a consistent threshold for detecting whether the representative cycle of the most persistent PD point changed between iterations.}.

\begin{figure}[t]
\centering
\includegraphics[width=.99\textwidth]{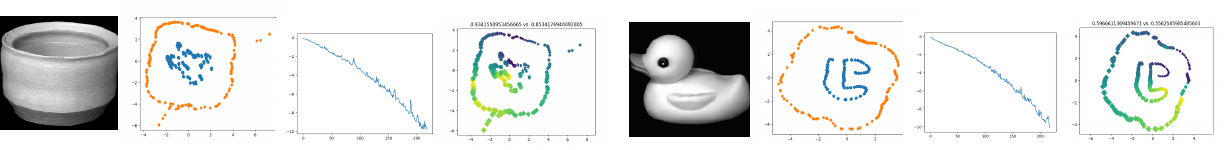}
\caption{\label{fig:rescoil} \texttt{COIL} images, their corresponding initial LS in blue and final LS obtained with diffeomorphic gradient descent in orange, the topological loss, and the same LSs colored with ground truth angles with final (left) vs. initial (right) correlations with predicted angles displayed as titles, for both vase (left) and duck (right).}
\end{figure}

We provide some qualitative results in Figure~\ref{fig:rescoil} (see also Appendix~\ref{app:expe}, Figure~\ref{fig:rescoil_full}). 
In order to quantify the improvement, we also computed the correlation between the ground-truth angles $\theta_i$ and the angles $\hat\theta_i$ computed from the topologically-optimized LS embeddings with 
$$\hat\theta_i:=\angle(\varphi\circ E(x_i)-\hat{\mathbb E}[\varphi\circ E(X)],\varphi\circ E(x_1)-\hat{\mathbb E}[\varphi\circ E(X)]),$$ 
where $\hat{\mathbb E}[\varphi\circ E(X)]:=n^{-1}\sum_{i=1}^n \varphi\circ E(x_i)$, and $\varphi$ denotes our flow or the sequence of vanilla gradients.
See the table below.

\begin{table}[h]
\begin{tabular}{ |c|c|c|c|c|c| } 
 \hline
 Optim. & Duck & Cat & Pig & Vase & Teapot \\
 \hline
 Vanilla & 0.56 & 0.79 & 0.17 & 0.85 & 0.32 \\ 
 Diffeo  & {\bf 0.6} & {\bf 0.83} & {\bf 0.74} & {\bf 0.93} & {\bf 0.39} \\ 
 \hline
\end{tabular}
\end{table}

 As expected, correlation becomes better after forcing the latent spaces to have the topology of a circle. This better interpretability is also illustrated in Figure~\ref{fig:gencoil}, in which four angles are specified,
which are mapped to the topologically-optimized LS, then pushed to the initial LS of the black-box VAE by following the reverted flow
of our learned diffeomorphism $\varphi$, 
and finally decoded back into images.

\section{Conclusion}
\label{sec:conc}

In this article, we have presented a way to turn sparse topological gradients into dense diffeomorphisms with quantifiable Lipschitz constants, and showcased practical benefits of this approach
in terms of convergence speed, scaling, and applications to black-box AE models on several datasets. 
Several questions are still open for future work. 

In terms of theoretical results, we plan on working on the stability between the diffeomorphic interpolations computed on a dataset and its subsamples. 
This requires some control over the locations of the critical points, which we expect 
to be possible in
{\em statistical estimation}; indeed
sublevel sets of density functions are know to have stable
critical points~\cite[Lemma 17]{Chazal2018}. 

Concerning the AE experiment, we plan to investigate the limitations presented in 
the figure below: 
\begin{wrapfigure}{l}{0.25\textwidth}
\centering
\vspace{-0.2cm}
\includegraphics[trim={0 1cm 0 1cm},width=.25\textwidth]{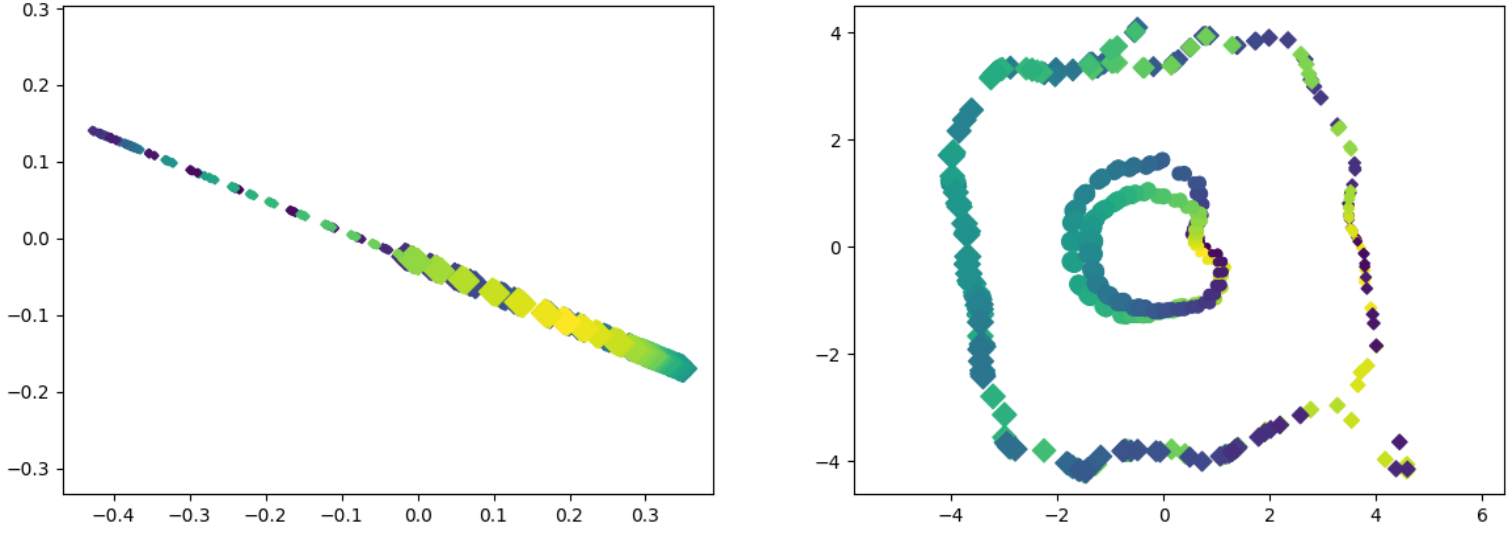}
\vspace{-0.5cm}
\end{wrapfigure}
as the initial LSs have zero (left) or two (right) loops, 
it is impossible to unfold them with diffeomorphisms; instead the optimized latent spaces either also exhibit no topology, or mixes different angles. 
In future work, we plan on investigating other losses or gradient descent schemes for diffeomorphic topological optimization, including 
stratified procedures similar to~\cite{leygonie2023gradient} that allow for local topological changes during training. 

\bibliographystyle{plain}
\bibliography{biblio}

\clearpage
\appendix

\section{A more extensive presentation of TDA}\label{app:tda}

The starting point of TDA is to extract \emph{quantitative} topological information from structured objects---for example graphs, points sampled on a manifold, time series, etc. 
Doing so relies on a piece of algebraic machinery called \emph{persistent homology} (PH), which informally detects the presence of underlying topological properties in a \emph{multiscale} way. 
Here, estimating topological properties should be understood as inferring the number of connected components (topology of dimension $0$), the presence of loops (topology of dimension $1$), of cavities (dimension $2$), and so on in higher dimensional settings. 

\paragraph{Simplicial filtrations.} Given a finite simplicial complex\footnote{A simplicial complex is a combinatorial object generalizing graphs and triangulations. 
Given a finite set of vertices $V = \{v_1,\dots,v_n\}$, a finite simplicial complex $K$ is a subset of $2^V$ (whose elements are called {\em simplices}) such that $\sigma \in K \Rightarrow \tau \in K,\ \forall \tau \subseteq \sigma$ (if a simplex is in the complex, its faces must be in it as well).} $K$,
a \emph{filtration} over $K$ is a map $t \mapsto K_t \subseteq K$ that is non-decreasing (for the inclusion). 
For each $\sigma \in K$, one can record the value $t(\sigma) \coloneqq \inf \{t\,|\, \sigma \in K_t\}$ at which the simplex $\sigma$ is inserted in the filtration. 
From a topological perspective, the insertion of $\sigma$ has exactly one of two effects: either it creates a new topological feature in $K_t$ (e.g., the insertion of an edge can create a loop, that is, a one-dimensional topological feature) or it destroys an existing feature of lower dimension (e.g., two independent connected components are now connected by the insertion of an edge). 
Relying on a matrix reduction algorithm \cite[\S IV.2]{edelsbrunner2010computational}, PH identifies, for each topological feature appearing in the filtration, the \emph{critical pair} of simplices $(\sigma_b,\sigma_d)$ that created and destroyed this feature, and as a byproduct the corresponding \emph{birth} and \emph{death} times $t(\sigma_b), t(\sigma_d)$.\footnote{It may happen that a topological feature appears at some time $t_b$ and is never destroyed, in which case the death time is set to $+\infty$. However, in the context of the Vietoris-Rips filtration, extensively studied in this work, this (almost) never happens. See the next paragraph.} 
The collection of intervals $(t(\sigma_b), t(\sigma_d))$ is a (finite) subset of the open half-plane $\{(b,d) \in \bR^2\,|\, b < d\}$, called the \emph{persistence diagram} (PD) of the filtration $(K_t)_t$. 
The distance of such a point $(t_b, t_d)$ to the diagonal $\{b = d\}$, namely $2^{-\frac{1}{2}}|t(\sigma_b) - t(\sigma_d)|$, is called the \emph{persistence} of the corresponding topological feature, as an indicator of ``for how long'' could this feature be detected in the filtration $(K_t)_t$. 

Note that if $(\sigma_b,\sigma_d)$ is a critical pair for our filtration $(K_t)_t$, it holds that $|\sigma_b| = |\sigma_d|+1$. 
The quantity $|\sigma_b|-1$ is the dimension of the corresponding topological feature (e.g., loops, which are created by the insertion of an edge and killed by the insertion of a triangle, are topological features of dimension one). 
From a computational perspective, deriving the PD of a filtration $(K_t)_t$ is empirically\footnote{The theoretical worst case yields a cubic complexity, but the matrix that has to be reduced is typically very sparse, enabling this practical speed up.} quasi-linear with respect to the number of simplices in $K$ (which can still be extremely high in the case of Vietoris-Rips filtration---see below---where $K = 2^V$ with $|V|=n$ being typically quite large). 

\paragraph{The Vietoris-Rips filtration.} A particular instance of simplicial filtration that will be extensively used in this work is the Vietoris-Rips (VR) one. 
Given $X = (x_1,\dots,x_n) \in \bR^{n \times d}$ a point cloud of $n$ points in dimension $d$, one consider the simplicial complex $K = 2^X$ and then the filtration $(K_t)_t$ defined by 
\begin{equation}
    \sigma=\{x_{i_1}\dots x_{i_p}\} \in K_t \Longleftrightarrow \forall j,j' \in \{1,\dots,p\},\ \|x_{i_j}-x_{i_{j'}}\| \leq t.
\end{equation}
The corresponding persistence diagram will be denoted, for the sake of simplicity, by $\dgm(X)$. 

Note that for $t < 0$, $K_t = \emptyset$, when $t \geq \mathrm{diam}(X)$, $K_t = K$, and there is always a point with coordinates $(0,+\infty)$ in $\dgm(X)$ accounting for the remaining connected component when $t \to \infty$. 
This is the unique point in $\dgm(X)$ for which the second coordinate is $+\infty$ (and is often discarded in practice, as it does not play any significant role). 
Intuitively, $\dgm(X)$ reflects the topological properties that can be inferred from the \emph{geometry} of $X$; this can be formalized by various results which state, roughly, that if the $x_i$ are i.i.d.~samples from a regular measure $\mu$ supported on a submanifold $\cM \subset \bR^d$, then with high probability the topological properties of $\cM$ are reflected in $\dgm(X)$ (see \cite{chazal2008vietorisRipsFiltration,chazal2015subsampling}). 
From a computational perspective, note that the VR filtration only depends on $X$ through the pairwise distance matrix $(\|x_i - x_j\|)_{1 \leq i,j\leq n}$, and thus the complexity of computing $\dgm(X)$ depends only linearly in $d$\footnote{However, the statistical efficiency of $\dgm(X)$ when it is used as an estimator for the topology of an underlying manifold $\cM$ deteriorate when the \emph{intrinsic} dimension of $\cM$ increases.}. 
On the other hand, since $\dgm(X)$ scales (at least) linearly with respect to the number of simplices in $K$, computing the whole VR diagram of a point cloud $X \in \bR^{n \times d}$ can take up to $O(2^n)$ operations. 
Even if one restricts topological features of dimension $d' \leq d$ (e.g.~$d'=1$ if one only considers loops)---as commonly done---the complexity is of order $O(n^{d'+2})$, which becomes quickly intractable if $n$ is large, even if $d' = 1$ or $2$. 

\section{Delayed proofs}\label{app:proof}

\begin{proof}[Proof of \cref{prop:LipschitzCste}]
One has:
$\|\tilde v(x)-\tilde v(y)\|_1 =    \|\sum_{i\in I} (K(x,x_i)-K(y,x_i))(-\BK^{-1} \nabla L(\tilde{X}(t)))_i\|_1\leq \sum_{i\in I}|\rho_\sigma(\|x-x_i\|)-\rho_\sigma(\|y-x_i\|)|\cdot\|(-\BK^{-1} \nabla L(\tilde{X}(t)))_i\|_1$,
since we are using Gaussian kernels. As $|\rho_\sigma(\|x-x_i\|)-\rho_\sigma(\|y-x_i\|)| \leq C_{d,\sigma}\|x-y\|_2$, with $C_{d,\sigma}=2^{\frac{d+1}{2}}\pi^{\frac{d-1}{2}}\sigma^{d-1}$ (see~\cite[Theorem 8]{adams2017persistenceImages}),
 it follows that  $\|\tilde v(x)-\tilde v(y)\|_1 \leq C_{d,\sigma}\|x-y\|_2 \cdot \|\BK^{-1}\|_1\cdot \|\nabla L(\tilde{X}(t))\|_1$.

Let us upper bound the term $\|\nabla L(\tilde{X}(t))\|_1$. Let us start with
the simplification loss, one has $\frac{\partial \ell}{\partial b_i}=2(b_i-d_i)$ and, writing $b_i=\|x_{i_1}-x_{i_2}\|_2$ (for some critical points $x_{i_1},x_{i_2}\in\tilde X(t)$), one has $\frac{\partial b_i}{\partial x_{i_1}}=\frac{x_{i_1}-x_{i_2}}{\|x_{i_1}-x_{i_2}\|_2}$ and $\frac{\partial b_i}{\partial x_{i_2}}=-\frac{x_{i_1}-x_{i_2}}{\|x_{i_1}-x_{i_2}\|_2}$. Similarly, one has $\frac{\partial \ell}{\partial d_i}=-2(b_i-d_i)$, and writing $d_i=\|x_{i_3}-x_{i_4}\|_2$ provides the corresponding partial derivatives.

Applying~(\ref{eq:tda_chainrule}), this gives: 
$$\|\nabla L(\tilde{X}(t))\|_1=\sum_{x\in\tilde X(t)} \|\sum_{x\overset{b,1}{\rightarrow}(b_i,d_i)} 2(b_i-d_i) \frac{x-x_{i_2}}{\|x-x_{i_2}\|_2} - \sum_{x\overset{b,2}{\rightarrow}(b_i,d_i)} 2(b_i-d_i)\frac{x_{i_1}-x}{\|x_{i_1}-x\|_2}$$
$$-\sum_{x\overset{d,1}{\rightarrow}(b_i,d_i)} 2(b_i-d_i) \frac{x-x_{i_4}}{\|x-x_{i_4}\|_2} + \sum_{x\overset{d,2}{\rightarrow}(b_i,d_i)} 2(b_i-d_i)\frac{x_{i_3}-x}{\|x_{i_3}-x\|_2}\|_1,$$
where $\overset{b,1}{\rightarrow}$ (resp. $\overset{b,2}{\rightarrow}$) means that $x$ appears as left point (resp. right point) in the computation of the birth filtration value $b_i$ of one of the $k$ PD points $(b_i,d_i)\in \Dgm(\tilde X(t))$ associated to the loss, and similarly for death filtration values.
A brutal majoration finally gives $\|\nabla L(\tilde{X}(t))\|_1\leq 2\sqrt{d}\sum_{x\in\tilde X(t)} \sum_{x\rightarrow(b_i,d_i)} |b_i-d_i|\leq 8\sqrt{d}\cdot {\rm Pers}_k(\Dgm(\tilde X(t)))$, as
there are at most four points associated to every $(b_i,d_i)\in\dgm(\tilde X(t))$. One can easily see that the same bound applies to the augmentation loss.

Let us finally bound $\|\BK^{-1}\|_1$, one has $\|\BK^{-1}\|_1=\kappa(\BK)/\|\BK\|_1\leq\kappa(\BK)$. Indeed, as we are using Gaussian kernels, $\|\BK\|_1={\rm max}_{1\leq i \leq n}\sum_{j=1}^n \rho_\sigma(\|x_i-x_j\|_2)\geq 1$.
\end{proof} 

\section{Complementary experimental results and details}\label{app:expe}

\paragraph{Subsampling and improving over \cite{carriere2021optimizing}.} 

\begin{figure}
    \includegraphics[width=\textwidth]{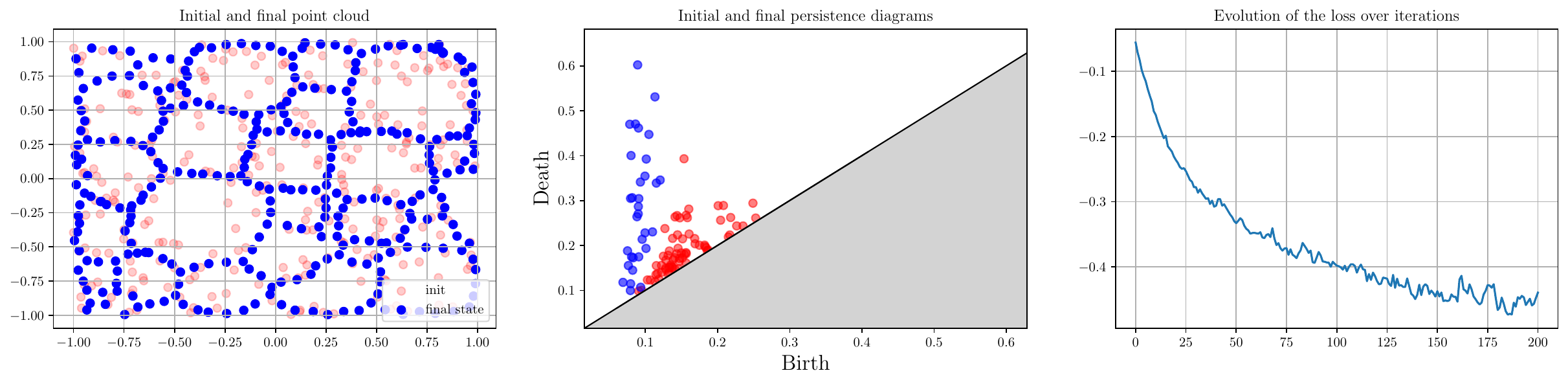}
    \caption{Topological optimization of an initial point cloud $X$ (in red) by minimizing $X \mapsto \sum_{(b,d) \in \dgm(X)} - |d|^2 + \sum_{x \in X} \mathrm{dist}(x, [-1,1]^2)$. This loss favors the apparition of topological features (loops) while the regularization term penalizes points that would go to infinity otherwise.---Experiment reproduced following the setting of \cite{carriere2021optimizing}, using code available at \url{https://github.com/GUDHI/TDA-tutorial/blob/master/Tuto-GUDHI-optimization.ipynb}.}
    \label{fig:carriere}
\end{figure}

We reproduce the experiment of \cite[\S 5]{carriere2021optimizing}, see also \cref{fig:carriere}, but starting from an initial point cloud $X_0$ of size $n=2000$ instead of $n=300$. 
This makes the raw computation of $\dgm(X_k)$ \emph{at each gradient} step unpractical. 
Following \cref{subsec:subsampling} we rely on subsampling with sample size $s=100$ and apply \cref{alg:algo}. 
Results are summarized in \cref{fig:subsample_carriere}. 
\begin{figure} \center \includegraphics[width=\textwidth]{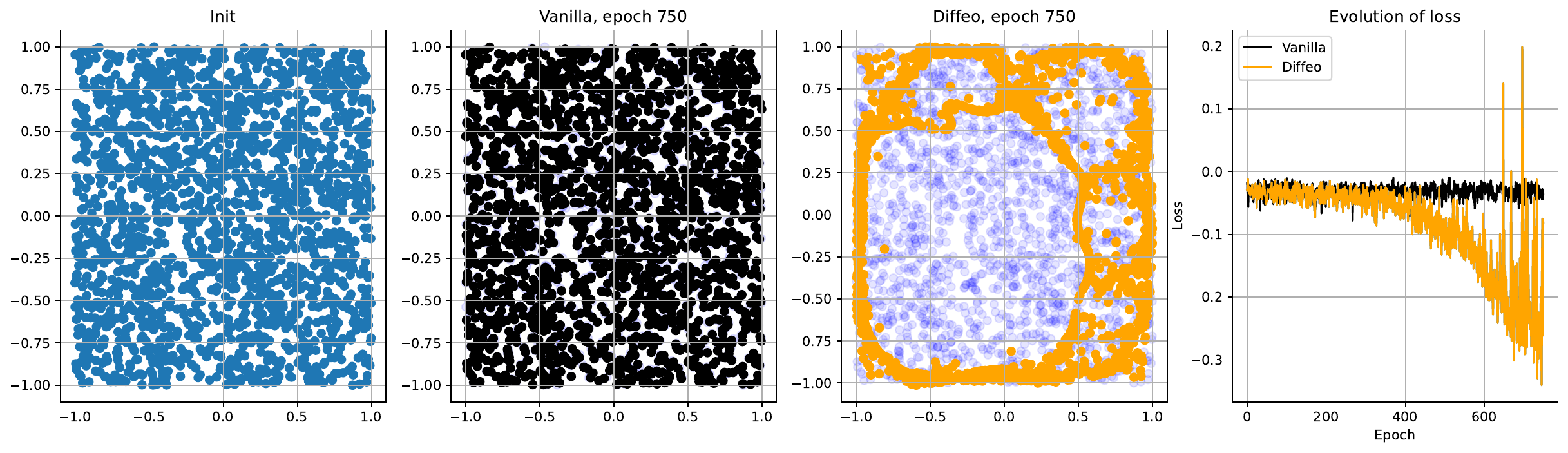} \caption{Topological optimization with subsampling. From left to right, the initial point cloud $X_0$, the point cloud after 750 steps of vanilla gradient descent (+subsampling), the point cloud after 750 steps of diffeomorphic interpolation gradient descent (+subsampling), loss evolution over epochs. Parameters: $\lambda = 0.1$, $\sigma = 0.1$.}
\label{fig:subsample_carriere}
\end{figure}
While relying on vanilla gradients and subsampling barely changes the point cloud even after $750$ epochs, the diffeomorphic interpolation gradient with subsampling manages to decrease the loss.

\begin{figure}
    \includegraphics[width=\textwidth]{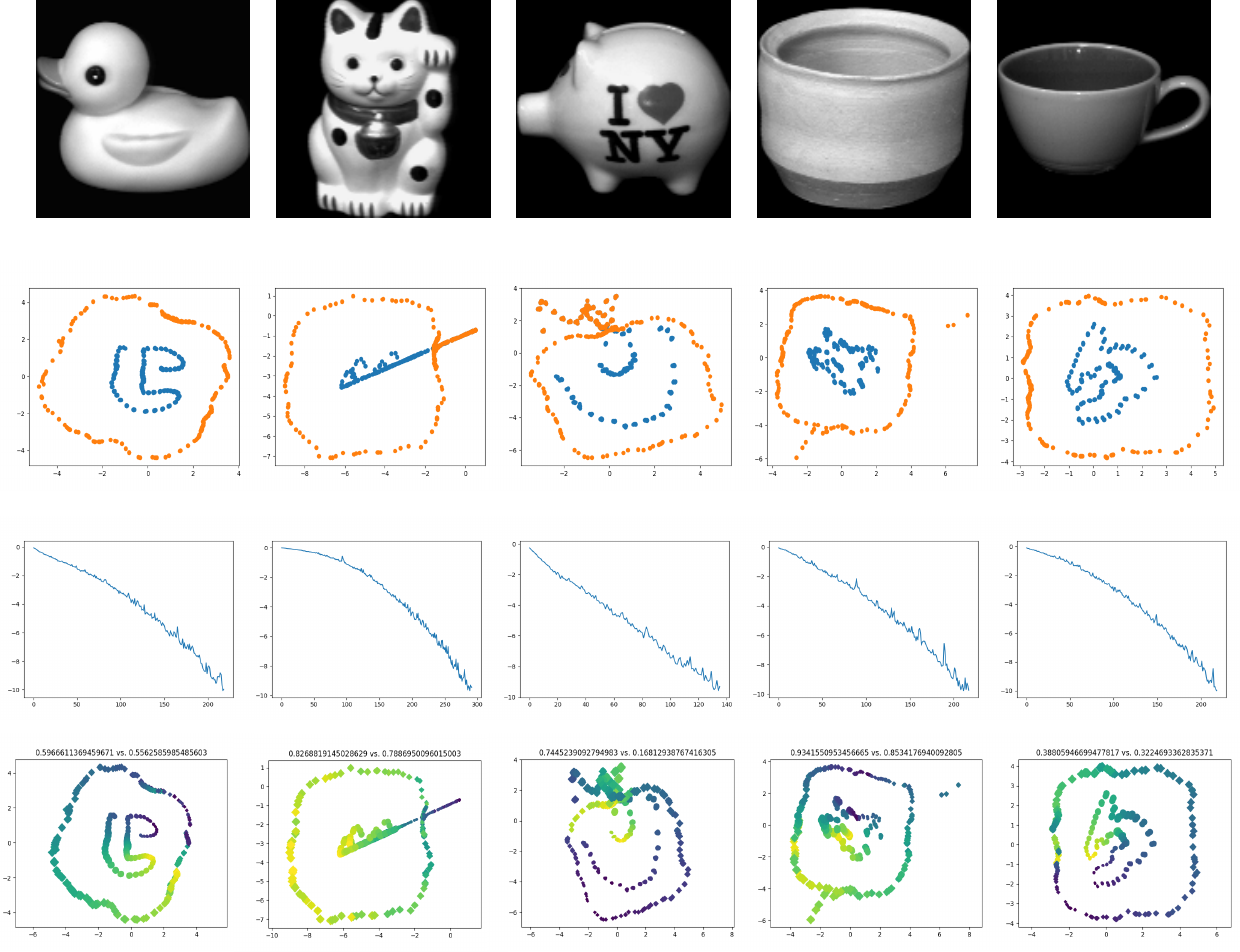}
    \caption{Topologically-optimized LSs for duck, cat, pig, vase and teapot.}
    \label{fig:rescoil_full}
\end{figure}

\paragraph{More extensive reports of running time and comments.} On the hardware used in our experiments (The first two experiments 
were run on a \texttt{11th Gen Intel(R) Core(TM) i5-1135G7 @ 2.40GHz}, 
the last one on a \texttt{2x Xeon SP Gold 5115 @ 2.40GHz}.), we report the approximate following running times:
\begin{itemize}
    \item Small point cloud optimization without subsampling (see \cref{fig:simple_benchmark}, $n=200$ points): one gradient descent iteration takes about $1$s for vanilla and Diffeo (our). The use of \texttt{oineus} integrated in our pipeline raises the running time (per iteration) to $10$ to $20$ seconds. Note that Diffeo and \texttt{oineus} may converge in less steps than Vanilla, preserving a competitive advantage. We also believe that \texttt{oineus} has a significant room for improvement in terms of running times and may be a promising method in the future to be used jointly with our diffeomorphic interpolation approach. 
    \item Iterating over the stanford bunny with subsampling ($n = 35,947$, $s=100$) takes about $3$ seconds per iteration for Vanilla and 20 second for our diffeomorphic interpolation method. The increase in running time with respect to the previous experiment mostly lies on instantiating and applying the $n \times d$ ($d=3$) vector field $\tilde{v}$ (requires to compute $\rho_i(x-x_i)$ for each new $x$ ($n$ of them) and sampled $x_i$ ($|I|$ of them, which is typically very small), hence a $\sim O(n)$ complexity).
    \item Training the VAE for the \texttt{COIL} dataset is the most computationally expensive part of this work: it takes about 3 hours per shape (20 of them). In contrast, performing the topological optimization take few dozen of minutes (less than one hour) for each shape. Applying it is done in few seconds at most. Recall that our method is designed to handle pre-trained models (which may be way more sophisticated than the one we used!); and its running time does not depend on the complexity of the model. 
\end{itemize}


\end{document}